\newtheorem{theorem}{Theorem}
\title{Boosting Fine-Grained Visual Anomaly Detection with \\ Coarse-Knowledge-Aware Adversarial Learning}
\author{
    Qingqing Fang\textsuperscript{\rm 1}, Qinliang Su\textsuperscript{\rm 1,2 \thanks{Corresponding author.}}, Wenxi Lv\textsuperscript{\rm 1}, Wenchao Xu\textsuperscript{\rm 3}, Jianxing Yu\textsuperscript{\rm 4,5}
}
\begin{document}
\maketitle

\begin{abstract}

Many unsupervised visual anomaly detection methods train an auto-encoder to reconstruct normal samples and then leverage the reconstruction error map to detect and localize the anomalies. However, due to the powerful modeling and generalization ability of neural networks, some anomalies can also be well reconstructed, resulting in unsatisfactory detection and localization accuracy. In this paper, a small coarsely-labeled anomaly dataset is first collected. Then, a coarse-knowledge-aware adversarial learning method is developed to align the distribution of reconstructed features with that of normal features. The alignment can effectively suppress the auto-encoder's reconstruction ability on anomalies and thus improve the detection accuracy. Considering that anomalies often only occupy very small areas in anomalous images, a patch-level adversarial learning strategy is further developed. Although no patch-level anomalous information is available, we rigorously prove that by simply viewing any patch features from anomalous images as anomalies, the proposed knowledge-aware method can also align the distribution of reconstructed patch features with the normal ones. Experimental results on four medical datasets and two industrial datasets demonstrate the effectiveness of our method in improving the detection and localization performance.
\end{abstract}

\begin{links}
    \link{Code}{https://github.com/Faustinaqq/CKAAD}
\end{links}

\section{Introduction}
\label{sec: Introduction}
Visual anomaly detection (AD) aims to detect images that show significant deviation from normal ones. In practice, the deviation could happen at image-level or at the more fine-grained pixel-level, where anomalies only appear at certain local areas, while the remaining large region looks to be normal~\cite{yang2022visual}. Fine-grained anomalies happen a lot in real-world applications, such as the X-ray/CT images with nodules or other lesions in medical diagnosis~\cite{fernando2021deep}, images of products with defects like undesired scratches or small holes in industrial manufacturing {\it etc}~\cite{liu2024deep}. Because of the ubiquitousness of fine-grained anomalies and their intrinsic distinction from image-level anomalies, in this paper, we focus our attention on the fine-grained anomaly detection. In addition to detecting anomalies, some applications also require locating the anomalies, like automatically marking the defective areas~\cite{roth2022towards,liu2023simplenet}. Thus, it is of great value to develop methods that can simultaneously detect and locate anomalies accurately.

Considering the extreme diverseness of anomalies, existing methods \citep{ruff2018deep,akcay2019ganomaly,tack2020csi,roth2022patchcore,deng2022RD,reiss2023mean} mostly only assume the access to normal datasets. By using it to train a model that characterizes the normal patterns, anomalies can then be detected by assessing their conformity to it. The reconstruction-based method~\cite{akcay2019ganomaly,vasilev2020q,zavrtanik2021reconstruction,guo2023recontrast}, as one of the most widely-used fine-grained anomaly detection methods, is a concrete example of this idea, which first trains an auto-encoder on normal images and then uses the reconstruction error map to detect and localize anomalies. However, due to the strong modeling and generalization ability of neural networks, it is found that auto-encoders sometimes can also reconstruct the anomalous areas well due to the leakage of visual information from other parts of images, causing the reconstruction error map unreliable for anomaly detection and localization~\cite{you2022unified}.

The reason why the reconstruction-based methods cannot lead to competitive performance can be largely attributed to the only usage of normal samples when training the auto-encoder, which makes the decision boundary between normal and anomalous samples blurred~\cite{xia2022gan}. To boost the detection and localization accuracy, we propose to introduce a small dataset comprised of anomalous images. We argue that it is feasible to collect a small number of anomalies in many real-world applications~\cite{ruff2019deepsad,pang2021devnet,ding2022dra}. As exemplified in medical diagnosis and industrial manufacturing, there often exist some X-Ray images that have been confirmed to be problematic by veteran doctors, or images that are taken for products with defects. But compared with datasets in classification tasks, the anomaly datasets possess two unique characteristics: 1) {\emph{Coarseness}}: anomalies only occupy a small area of the image, but only the entire image is known to be anomalous; 2) {\emph{Incompleteness}}: the dataset only covers a tiny subset of all possible anomaly types due to the diverseness of anomalies. Thus, the dataset can be viewed as a kind of coarse anomaly knowledge, and our goal is to make use of it for better anomaly detection and localization. We note that some recent works have also proposed to collect an additional anomaly dataset and then use it to help detect anomalies like the Deep SAD~\cite{ruff2019deepsad}, PANDA-OE~\cite{reiss2021panda}, DevNet~\cite{pang2021devnet}, DRA~\cite{ding2022dra}, AA-BiGAN~\cite{tian2022aa-bigan}, PReNet~\cite{pang2023deep}, {\it etc}. However, these methods simply view an entire image in the anomaly dataset as anomalous, and rarely explicitly take the coarseness characteristics of anomalies into account. Thus, when they are applied to fine-grained anomaly detection problems, they may not make the best use of the knowledge hidden in the anomaly dataset, resulting in a performance that is not good enough. Moreover, all of these methods only consider how to improve anomaly detection accuracy, but rarely bear in mind the anomaly localization task that is also important in many scenarios.

In this work, we proposed to leverage the coarse anomaly knowledge to boost the anomaly detection and localization performance of feature-reconstruction-based methods. 
To this end,  a Coarse-Knowledge-Aware adversarial learning Anomaly Detection (CKAAD) strategy is developed to suppress the auto-encoder's reconstruction ability on anomalies by seeking to align the distribution of reconstructed features with that of normal features, where the features are extracted from a pre-trained ResNet in advance. Specifically, we first utilize the coarse anomalous knowledge to develop an energy-based discriminator, which is trained to assign low energy for normal features, but high energy for reconstructed features and anomalous features. The discriminator can effectively incentivize the auto-encoder to output normal features, while explicitly avoiding outputting anomalous features, thereby achieving the alignment with the distribution of normal features. Considering that anomalies often only occur in small areas in the anomalous images, a patch-level adversarial learning strategy is further developed. Because there are no patches that are known to be anomalous surely, we show that we can simply view all patches from an anomalous image as anomalies and then apply a similar adversarial learning strategy at image-level to guide the auto-encoder's reconstruction. We rigorously prove that the proposed patch-level adversarial learning strategy can also lead to the alignment between the reconstructed features and normal features, even though the patch-based anomalous knowledge is not completely correct. 
Experimental results on real-world anomaly datasets including four medical datasets and two industrial datasets demonstrate that by leveraging a small number of coarsely labeled anomalies, our proposed coarse-knowledge-aware method can boost the detection and localization performance remarkably.

\section{Related Work}

\paragraph{Unsupervised methods} Existing unsupervised methods only use normal samples. Among them, reconstruction-based methods commonly use auto-encoder~\cite{zavrtanik2021reconstruction,ZhangWC22}, variational auto-encoder~\cite{vasilev2020q} or generative adversarial network~\cite{schlegl2017unsupervised,akcay2019ganomaly} to reconstruct normal images, assuming that unseen anomalous images can not be reconstructed well, so the difference between the input and reconstructed images can be used to detect anomalies. Because these image-level methods hardly localize anomalous areas accurately, the feature reconstruction methods ~\cite{salehi2021multiresolution,deng2022RD,tien2023revisiting,guo2023recontrast} 
are further proposed. But whether reconstructing in the image or feature level, these reconstruction-based methods face the same problem that the anomalies are also restored well, decreasing the detection performance. Besides the reconstruction-based methods, one-class methods~\cite{ruff2018deep,yi2020patch,reiss2021panda,ijcai2022cap} map normal images or patches to a compact space, then determine
anomalies based on the distance. Embedding-based methods~\cite{tack2020csi,defard2021padim,ijcai2021mcl,reiss2023mean} discriminate anomalies according to the learned embedding features.
Other unsupervised methods~\cite{reiss2021panda,zavrtanik2021draem,li2021cutpaste,liu2023simplenet} utilize synthesized pseudo anomalies for directly predicting anomaly labels or masks, 
can result in poor detection performance on real anomalies.

\paragraph{Weakly-supervised methods} In recent years, some weakly-supervised methods have been proposed to enhance anomaly detection performance by utilizing observed partial anomalies.
Based on pre-trained models, PANDA-OE~\cite{reiss2021panda} directly employs classification models, but directly using a discriminator to find a boundary can result in misclassifications of unseen anomalies.
Instead of training a classifier,
Deep SAD~\cite{ruff2019deepsad} maps normal samples to a compact space, pushing anomalous samples away. 
DPLAN~\cite{pang2021dplan} guides the model to find anomaly samples and assigns higher anomaly scores through reinforcement learning.
GAN-based methods~\cite{kimura2020_wacv_adversarial,tian2022aa-bigan,tian2023leveraging,lvcontamination} avoid generating anomalous samples and get the anomaly score based on the norm of latent feature or the reconstruction error.
DevNet~\cite{pang2021devnet} and DRA~\cite{ding2022dra} use deviation loss or score heads to separate anomalous patches from normal ones.
PReNet~\cite{pang2023deep} detects anomalies based on pairwise anomaly scores learned by three kinds of instance pairs.
However, these methods directly enlarging the distinction of anomaly scores between normal and anomalous samples, are prone to over-fit on these collected samples. Besides, most of them primarily use these incomplete anomalies for detection, rarely leveraging such weakly supervised information to achieve accurate localization performance.

   

\section{Preliminaries on Feature Reconstruction-Based Anomaly Detection}
Due to the diverseness of anomalies, current anomaly detection methods are generally established on the assumption of access to a normal dataset ${\mathcal{X}}^+ = \left\{x_1^+, x_2^+, \cdots, x_n^+\right\}$. By seeking to characterize the normal patterns in normal samples as accurately as possible, anomalies can be identified by checking whether they conform to the normal patterns. Regarding methods following this approach, the most widely used one is the reconstruction-based method, which basically trains an auto-encoder (AE) on normal samples and then uses the reconstruction error to detect anomalies. 

For image data, to yield better performance, it is often suggested to reconstruct feature maps $\{F^\ell\}_{\ell=1}^L$ that are extracted from pre-trained ResNet ~\cite{he2016resnet}, instead of reconstructing original images, with $F^\ell$ and $L$ denoting the feature map from the $\ell$-th block of ResNet and the total number of blocks. Since the feature maps from different blocks are of different sizes, when implementing the encoder, we first down-sample maps to the same size as the smallest one by a sequence of $3\times 3$ convolutional operators and then concatenate them along the channel dimension. The concatenated maps are then further fed into a ResNet block to produce the final latent representation $z$, as shown in Figure \ref{fig:model}. The whole procedure can be described as $z = {\mathcal{E}}_\theta(\{{F}^s\}_{s\in {\mathcal{S}}})$, where ${\mathcal{S}}\subset \{1, 2, \cdots, L\}$ represents a subset of $L$ ResNet blocks that are chosen to participate in reconstruction. Given the latent code $z$, residual deconvolution operations are then employed to up-sample it stage by stage to recover the feature maps of different sizes $\tilde{F}^s$ for ${s\in {\mathcal{S}}}$, that is, $\{\tilde{F}^s\}_{s\in {\mathcal{S}}} = {\mathcal{D}}_\theta(z)$. By viewing the original feature maps $\left\{F^{s}\right\}_{s\in {\mathcal{S}}}$ as input and the recovered maps $\{\tilde{F}^s\}_{s\in {\mathcal{S}}}$ as output, the encoder ${\mathcal{E}}_\theta(\cdot)$ and decoder ${\mathcal{D}}_\theta(\cdot)$ can be jointly written as
\begin{align}
     \quad \{\tilde{F}^{s}\}_{s\in {\mathcal{S}}} = {G}_\theta\left(\left\{F^{s}\right\}_{s\in {\mathcal{S}}}\right),
\end{align}
where $G_\theta(\cdot) = {\mathcal{D}}_\theta({\mathcal{E}}_\theta(\cdot))$.

The auto-encoder $G_\theta(\cdot)$ is trained to reduce the difference between $F^s$ and $\tilde F^s$ for $s\in {\mathcal{S}}$ on all training normal samples by minimizing the loss
\begin{align}\label{eq:gcd}
L_{rec}^+\!=\!\frac{1}{|{\mathcal{X}}^+|}\!\!\! \sum_{x\in{\mathcal{X}}^+}  \!\! \sum_{s \in {\mathcal{S}}}\! \left(1\!-\!\cos \! \left(\!\text{flatten}(\!\tilde F^s\!), \text{flatten}(\!F^s\!)\right) \!\! \right)
\end{align}
where $\text{flatten}(\cdot)$ and $\cos(\cdot, \cdot)$ mean flattening the tensor into one-dimensional vector and the cosine similarity. Since the loss is trained only on normal samples, the auto-encoder tends to reconstruct the input normal images well, but poorly for anomalous ones. That is why the error maps between $F^s$ and $\tilde F^s$ can be employed to detect and localize the anomalies. However, due to the generalization ability of neural networks, it is found that some anomalous images can also be well reconstructed, making the error map not always reliable in detecting and localizing the anomalies.

\section{Boosting Reconstruction-Based Anomaly Detection via Coarse-Knowledge-Aware Adversarial Learning}

\begin{figure*}
    \centering
    \includegraphics[width=0.85\linewidth]{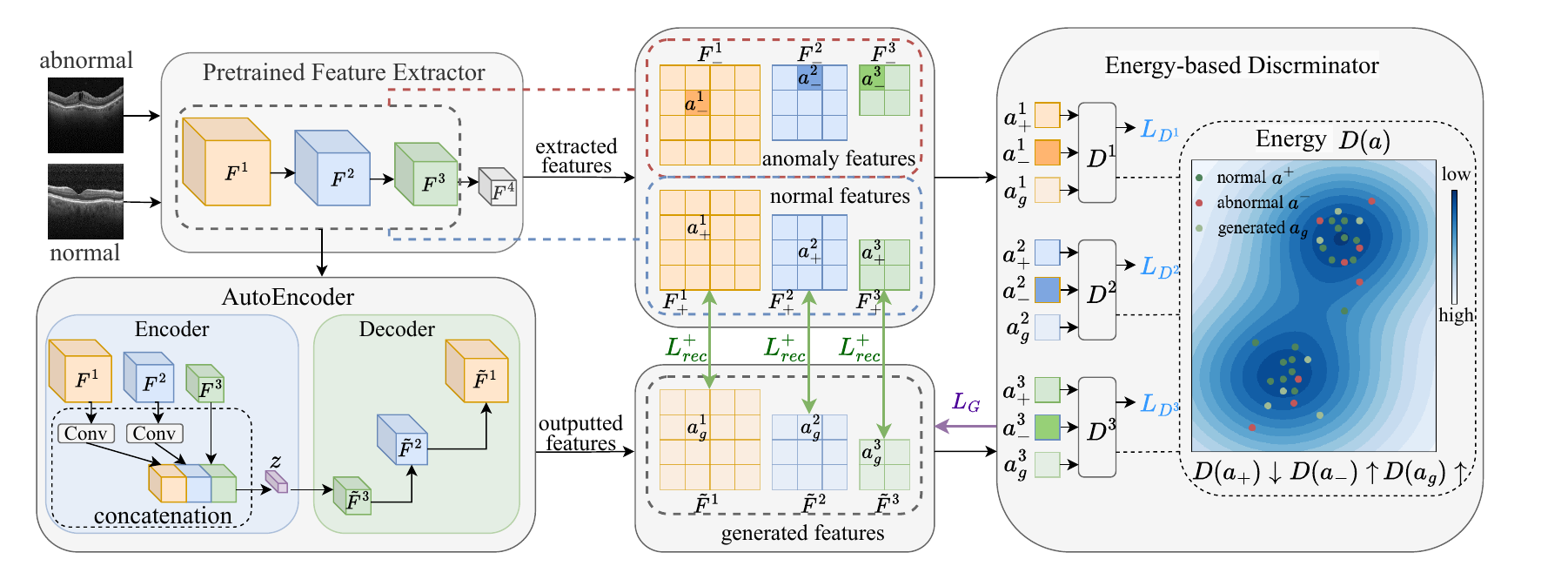}
    \caption{Framework of our proposed CKAAD. With the normal and anomalous feature maps extracted from pre-trained ResNet as well as the generated feature maps output from the auto-encoder, the energy-based discriminator $D^\ell$ is trained to distinguish between normal, anomaly, and generated patch features output from the $l$-th layer of ResNet. The auto-encoder is then incentivized to always output normal features regardless of the input types.}
    \label{fig:model}
\end{figure*}

To improve the detection and localization accuracy, in addition to the normal dataset ${\mathcal{X}}^+$, we further assume the availability of a small anomaly dataset
\begin{align}
    {\mathcal{X}}^- = \left\{x_1^-, x_2^-, \cdots, x_m^-\right\}.
\end{align}
As elaborated in the Introduction, due to the diverseness of anomalies, the collected anomalies in ${\mathcal{X}}^-$ may only cover a very tiny number of all possible anomaly types. Moreover, due to the annotating cost, we assume we only know the images in ${\mathcal{X}}^-$ are anomalous, but have no knowledge of the exact location of the anomalies. This paper aims to leverage this coarse anomalous knowledge to better detect and locate anomalies.

One of the most direct ways to leverage the dataset ${\mathcal{X}}^-$ is to minimize the following loss
\begin{align} \label{eq:L_rec}
    L_{rec} = L^+_{rec} - \lambda L^-_{rec},
\end{align}
where $L^-_{rec}$ denotes the reconstruction error computed over samples in ${\mathcal{X}}^-$; and $\lambda \in {\mathbb{R}}^+$ is a positive weighting factor. However, the loss $L_{rec}$ only drives the auto-encoder to output feature maps $\tilde F^s$ that look different from the input $F^s$ for anomalous samples, but never restricting how the difference should look like. Because anomalies often only occupy a small area in anomalous images in many real-world applications, simply enlarging the reconstruction error of anomalous samples has the risk of turning normal areas into anomalies, deteriorating the detection and localization performance. To enable the auto-encoder to detect and localize the anomalies accurately, we argue that we should encourage it to possess the following two properties: 
\begin{enumerate}
    \item[P1)] The auto-encoder should manage to transform an anomalous image to its normal counterpart, {\it e.g.}, transforming an anomalous image with a small hole into one that has the hole removed.
    \item[P2)] The reconstruction quality of normal samples should not be compromised too much by the introduced anomalies. 
\end{enumerate}
If the two appealing properties hold, the reconstruction error map between input and output features could be used to detect and localize the anomalies more reliably.

\subsection{Aligning the Distributions of Reconstructed Features with Normal Features}

To achieve the appealing properties above, we propose to view the auto-encoder as a generator $G_\theta(\cdot)$. For the simplicity of presentation, we only present how to process one feature map $F^s\in \{F^s\}_{s\in {\mathcal{S}}}$, and denote the chosen map $F^s$ as $F$ for conciseness, while the remaining feature maps can be processed similarly. Since we now have two datasets available, ${\mathcal{X}}^+$ and ${\mathcal{X}}^-$, we can use them to construct a distribution regarding input feature maps as
\begin{align}
    \mathbb{P}(F) = \alpha\mathbb{P}^+(F) + (1-\alpha)\mathbb{P}^-(F),
    \label{eq:p(F)}
\end{align}
where $\mathbb{P}^+(F)$ and $\mathbb{P}^-(F)$ represent feature map distributions of normal samples $x^+ \in {\mathcal{X}}^+$ and anomalous samples $x^- \in {\mathcal{X}}^-$, respectively; $\alpha \in [0, 1]$ is a parameter used to control the mixture ratio of the two distributions. For any input feature $F$ drawn from the distribution $\mathbb{P}(F)$, the generator (auto-encoder) will output a feature map $G_\theta(F)$. The distribution of the generated feature map $G_\theta(F)$ can be represented as
\begin{align}
\mathbb{P}_g(F) = \alpha\mathbb{P}_g^+(F) + (1-\alpha)\mathbb{P}_g^-(F),
\end{align}
where $\mathbb{P}_g^+(F)$ and $\mathbb{P}_g^-(F)$ denote the distributions of generated (reconstructed) feature map $G_\theta(F)$ when the input $F$ is drawn from $\mathbb{P}^+(F)$ and $\mathbb{P}^-(F)$, respectively. If the distribution of $\mathbb{P}_g(F)$ is forced to align with the distribution of normal features $\mathbb{P}^+(F)$, it amounts to driving $\mathbb{P}^-_g(F)$ towards $\mathbb{P}^+(F)$, that is, having the reconstructed features of anomalous samples to look similar to normal ones. In this way, the auto-encoder transforms an anomalous sample into a normal one effectively, thereby partially achieving the appealing property P1.

\paragraph{Image-Level Alignment}  To align the generative distribution $\mathbb{P}_g(F)$ with normal feature distribution $\mathbb{P}^+(F)$, a generative adversarial network (GAN)~\cite{goodfellow2014generative} of the form below can be used to regularize the generator
\begin{align}
    \min_{G} \max_{D}\mathbb{E}_{F \!\sim \mathbb{P}^+\!(\!F)}\!\log [D(F)]\!\!+\!\mathbb{E}_{F \!\sim \mathbb{P}_g(\!F)}\!\log [1\!\!-\!\!D(\!F)]
    \label{eq:origin_gan}
\end{align}
After convergence, the distribution of $G_\theta(F)$ ({\it i.e.}, $\mathbb{P}_g(F)$) will be equal to $\mathbb{P}^+(F)$. Although the vanilla GAN  can align the generative distribution to the normal feature distribution, the valuable collected anomalies are only used to construct the input distribution $\mathbb{P}(F)$. Since the discriminator has not been informed of any anomaly information, the discriminator could mistakenly recognize anomalies as normal samples, compromising the model's ability of transforming anomaly images into normal ones. To address this issue, we propose to use the anomaly dataset ${\mathcal{X}}^-$ to enhance the discriminator's ability to distinguish between normal and anomalous samples. To this end, we propose an energy-based discriminator $D(\cdot): \mathcal{F} \rightarrow[0, +\infty)$, which is designed to assign low energy to normal features and high energy to generated and anomalous features. The discriminator is trained by minimizing the following loss
\begin{align}
     L_{D}(D, G_{\theta}) \!=& \mathbb{E}_{ F \sim \mathbb{P}^+(F)}D(F) \!+\! \gamma \mathbb{E}_{F \sim \mathbb{P}_g(F)}[0, a \!- \!D(F)]^+ \nonumber \\
    & + (1-\gamma)\mathbb{E}_{F \sim \mathbb{P}^-(F)}[0, a \!-\! D(F)]^+,
    \label{eq:discriminator_image}
\end{align}
where $[u,v]^+ = \max(u, v)$, $a > 0$ is a threshold, and $\gamma\in (0, 1]$ is the balance coefficient. With the trained discriminator, the generator is encouraged to output features to get lower energy by minimizing 
\begin{align}
    L_{G}(D, G_{\theta})\!  = & \gamma \mathbb{E}_{ F \sim \mathbb{P}_g(F)}D(F) + (1\!-\!\gamma)\mathbb{E}_{F \sim \mathbb{P}^-(F)}D(F) \nonumber \\
    &- \mathbb{E}_{ F \sim \mathbb{P}^+(F)}D(F).
    \label{eq:generator_image}
\end{align}
When \eqref{eq:discriminator_image} and \eqref{eq:generator_image} are used to update the discriminator and generator, it can be proved that the output feature distribution $\mathbb{P}_g(F)$ can align with the normal feature distribution $\mathbb{P}^+(F)$. We give the following theorem.
\begin{theorem}
Let $\mathbb{P}^+(F)$, $\mathbb{P}^-(F)$ and $\mathbb{P}_g(F)$ be the distributions of normal, anomalous, and generated feature maps. Assuming the $\mathbb{P}^+(F)$ and $\mathbb{P}^-(F)$ are two disjoint distributions, and $\gamma \in (0, 1]$.
When Discriminator $D(\cdot): \mathcal{F}\rightarrow [0, +\infty)$ and generator $G(\cdot)$ are updated according to \eqref{eq:discriminator_image} and \eqref{eq:generator_image}, 
there exists a Nash equilibrium of the system $(D^*, G^*)$ such that $\mathbb{P}_g(F) = \mathbb{P}^+(F)$.
\label{theorem:image}
\end{theorem}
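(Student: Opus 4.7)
The plan is to follow the two-step minimax argument used to prove convergence of classical GANs, adapted to the energy-based hinge losses in \eqref{eq:discriminator_image}--\eqref{eq:generator_image}. First I would characterize the best-response discriminator $D^*$ for an arbitrary fixed generator distribution $\mathbb{P}_g$. Since $L_D$ is a sum of three integrals against $\mathbb{P}^+$, $\mathbb{P}_g$, and $\mathbb{P}^-$, it decomposes into a pointwise functional of $D(F)$. Writing $w_1(F)=p^+(F)$ and $w_2(F)=\gamma p_g(F)+(1-\gamma)p^-(F)$, the per-point objective is $w_1\, d+w_2\max(0,a-d)$ over $d\in[0,+\infty)$. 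A case split on $d\geq a$ versus $d<a$, followed by comparing $w_1$ with $w_2$, should yield the threshold rule $D^*(F)=0$ when $w_1(F)>w_2(F)$ and $D^*(F)=a$ when $w_1(F)\leq w_2(F)$.

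Next I would analyze the generator. For any fixed $D$, the two terms $(1-\gamma)\mathbb{E}_{F\sim\mathbb{P}^-}D(F)$ and $-\mathbb{E}_{F\sim\mathbb{P}^+}D(F)$ in \eqref{eq:generator_image} are constants in $G$, so minimizing $L_G(D,G_\theta)$ reduces to minimizing $\mathbb{E}_{F\sim\mathbb{P}_g}D(F)$, which is achieved by concentrating $\mathbb{P}_g$ on the minimum level set of $D$. Combining the two steps, I would exhibit a concrete equilibrium $(D^*,G^*)$ with $\mathbb{P}_{g^*}=\mathbb{P}^+$. Plugging this candidate into the threshold rule from step one and invoking the assumed disjointness of $\mathbb{P}^+$ and $\mathbb{P}^-$ gives $w_1=p^+>\gamma p^+=w_2$ on $\mathrm{supp}(\mathbb{P}^+)$ and $w_1=0<(1-\gamma)p^-=w_2$ on $\mathrm{supp}(\mathbb{P}^-)$, so the best-response $D^*$ equals $0$ on $\mathrm{supp}(\mathbb{P}^+)$ and $a$ on $\mathrm{supp}(\mathbb{P}^-)$; on the complement of both supports we may set $D^*=a$ without affecting any loss. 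With this $D^*$, the generator's reduced objective $\gamma\mathbb{E}_{F\sim\mathbb{P}_g}D^*(F)$ equals zero whenever $\mathbb{P}_g$ is supported in $\mathrm{supp}(\mathbb{P}^+)$, and $\mathbb{P}_g=\mathbb{P}^+$ is a valid such choice, so $G^*$ is a best response to $D^*$ and the pair $(D^*,G^*)$ is a Nash equilibrium.

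The hard part is not algebraic but technical: one must handle the hinge $\max(0,a-d)$ and the one-sided constraint $d\geq 0$ carefully in the pointwise derivation, and in the equilibrium verification one must be mindful of indifference regions $\{w_1=w_2\}$ (including the measure-zero set outside both supports), where $D^*$ is not uniquely determined. The disjointness of $\mathbb{P}^+$ and $\mathbb{P}^-$ together with $\gamma\in(0,1]$ is precisely what makes the inequalities $w_1>w_2$ on $\mathrm{supp}(\mathbb{P}^+)$ and $w_1<w_2$ on $\mathrm{supp}(\mathbb{P}^-)$ strict in the interesting regime; the edge case $\gamma=1$ collapses $w_1=w_2$ on $\mathrm{supp}(\mathbb{P}^+)$, but one simply selects $D^*=0$ from the indifference set and the rest of the argument is unchanged.
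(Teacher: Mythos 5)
Your argument is correct, and it reaches the equilibrium by a genuinely different route than the paper. The paper's proof first restricts $D$ to $[0,a]$ (by a clipping argument), linearizes the hinge so that $L_D = a + \mathbb{E}_{F\sim\mathbb{P}^+}D(F) - \mathbb{E}_{F\sim\mathbb{P}'_m}D(F)$ with $\mathbb{P}'_m=\gamma\mathbb{P}_g+(1-\gamma)\mathbb{P}^-$, and then invokes a variational characterization of total variation (a separate lemma proved via Hahn decomposition) to identify the value of the inner problem, obtaining $L_G(D^*,G_\theta)=\frac{a}{2}\|\mathbb{P}^+,\mathbb{P}'_m\|_{TV}$; the equilibrium then follows from lower-bounding this by $\frac{a(1-\gamma)}{2}$ using disjointness and checking that $\mathbb{P}_g=\mathbb{P}^+$ attains the bound. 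You instead do a pointwise best-response computation for $D$ (the per-point objective $w_1 d + w_2\max(0,a-d)$ over $d\ge 0$ is piecewise linear with slopes $w_1-w_2$ on $[0,a]$ and $w_1\ge 0$ beyond, which correctly yields the $\{0,a\}$ threshold rule and subsumes the paper's clipping step) and then verify the two Nash conditions directly at the candidate pair. What the paper's route buys is an interpretation of the generator's effective objective as a TV distance to the mixture $\mathbb{P}'_m$ — and the same lemma is reused almost verbatim for the patch-level Theorem 2 — whereas your route is more elementary (no Hahn decomposition), produces an explicit closed form for the optimal discriminator (energy $0$ on the normal support, $a$ elsewhere), and makes transparent exactly where disjointness and $\gamma\in(0,1]$ enter, including the indifference set at $\gamma=1$ that the paper never discusses. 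The only technicality you should make explicit in a full write-up is that $p^+$, $p_g$, $p^-$ should be taken as Radon--Nikodym derivatives with respect to a common dominating measure such as $\mathbb{P}^+ + \mathbb{P}_g + \mathbb{P}^-$ so that the pointwise comparison of $w_1$ and $w_2$ is well defined; with that in place your argument is complete.
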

\begin{proof}
\renewcommand{\qedsymbol}{}
    Please see the proof in the Supplementary Material.
\end{proof}

The discriminator trained with loss \eqref{eq:discriminator_image} has the ability to identify anomalous features from normal ones. If an anomalous image is generated, it will be recognized by the discriminator, which will then incentivize the auto-encoder to transform it into normal one. On the other hand, when input to the encoder is a normal sample, output from the generator will be similar to the input,
thus the adversarial learning will almost not affect this sample's reconstruction. In this way, we achieve the two appealing properties P1 and P2.

\paragraph{Patch-Level Alignment}
The proposed knowledge-aware discriminator in \eqref{eq:discriminator_image} distinguishes between normal and anomalous features at image-level. In practice, anomalies often only occupy small areas of an image. Hence, to achieve more accurate detection and localization, it is better to perform the alignment at the patch level. If we directly divide an image into numerous patches and then pass them through the pre-trained ResNet and the auto-encoder, it will be very expensive. Fortunately, the feature at position $(h, w)$ ({\it i.e.}, $F(h, w)$) corresponds to a patch in the origin image, thus it can be viewed as the feature of a patch. Similarly, the generated feature $\tilde F(h, w)$ at position $(h, w)$ can be viewed as the generated path feature. As shown in Figure \ref{fig:model}, $a^\ell_+$ and $ a^\ell_-$ represent a patch feature of a normal and abnormal image, respectively.

Denote the distribution of features of normal patches as ${\mathbb{P}}^+(p)$. Obviously, features of all patches from normal images follow the distribution ${\mathbb{P}}^+(p)$. For the generated feature $\tilde{F}(h, w)$ at position $(h, w)$, we denote its distribution as $\mathbb{P}_g^{(h, w)}(p)$, and further denote the generator that reconstructs the $(h, w)$-th patch feature as $G_\theta^{(h, w)}(\cdot)$. To align the distribution of generated patch features with that of normal patch features, unlike the alignment at the image-level, no path-level anomaly annotations are available. For an anomalous image, it contains both normal and anomalous patches. Thus, for images from ${\mathcal{X}}^-$, we only know that its patch features follow a mixture distribution
\begin{align}
    \mathbb{P}_m(p) = \beta \mathbb{P}^+(p)+ (1 - \beta)\mathbb{P}^-(p),
\end{align}
where $\beta \in (0, 1)$ is the unknown mixed ratio of $\mathbb{P}^+(p)$ and $\mathbb{P}^-(p)$, with $\mathbb{P}^-(p)$ denoting the distribution of anomalous {\emph{patches}}. Although the available anomalous knowledge is noisy, the following theorem shows that we can still make use of it to enhance the auto-encoder's ability to recover normal patches from anomalous ones.

\begin{theorem}
Suppose the $\mathbb{P}^+(p)$ and $\mathbb{P}^-(p)$ are two disjoint distributions and $\gamma \in (0, 1]$.
When Discriminator  $D^{(h, w)}(\cdot): \mathcal{P}\rightarrow [0, +\infty)$  and generator $G^{(h, w)}(\cdot)$ are updated by minimizing the two losses
\begin{align}
     L_{D}(D^{(h, w)}, G^{(h, w)}_{\theta}) \!=
     & \gamma \mathbb{E}_{p \sim \mathbb{P}^{(h, w)}_g(p)}[0, \! a \!-\! D^{(h,w)}(p)]^+ \nonumber \\
    & + \!(1\!-\!\gamma)\mathbb{E}_{p \sim \mathbb{P}_m\!(p)}[0,\! a \!- \!D^{(h,\!w)}\!(p)]^+ \nonumber \\
    & + \mathbb{E}_{ p \sim \mathbb{P}^+(p)}D^{(h,w)}(p), 
    \label{eq:discriminator_patch}
\end{align}
\begin{align}
     L_{G}(D^{(h,w)}, G^{(h,w)}_{\theta}) & = \gamma \mathbb{E}_{p \sim \mathbb{P}^{(h,w)}_g(p)}D^{(h,w)}(p) \nonumber \\
    & + (1-\gamma)\mathbb{E}_{p \sim \mathbb{P}_m(p)}D^{(h,w)}(p) \nonumber \\
    & -  \mathbb{E}_{ p \sim \mathbb{P}^+(p)}D^{(h,w)}(p), 
    \label{eq:generator_patch}
\end{align}
there exists a Nash equilibrium of the system $(D^{*(h,w)}, G^{*(h,w)})$ such that $\mathbb{P}^{(h, w)}_g(p) = \mathbb{P}^+(p)$.
\label{theorem:patch}
\end{theorem}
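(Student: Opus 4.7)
The plan is to mirror the standard GAN fixed-point analysis: first solve for the best-response discriminator $D^{*(h,w)}$ pointwise while $G^{(h,w)}_\theta$ is held fixed, then substitute it into $L_G$ and exhibit a generator $G^{*(h,w)}$ realising $\mathbb{P}^{(h,w)}_g = \mathbb{P}^+$ that is a best response to this $D^{*(h,w)}$. The twist, relative to Theorem~\ref{theorem:image}, is that the ``negative'' distribution available to the patch-level discriminator is the contaminated mixture $\mathbb{P}_m = \beta\mathbb{P}^+ + (1-\beta)\mathbb{P}^-$ rather than the clean $\mathbb{P}^-$, so the equilibrium cannot be pinned down by demanding pointwise density equality; instead the disjoint-support hypothesis is what carries the argument.

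First I would compute $D^{*(h,w)}$. The integrand of $L_D$ at a point $p$ is $p^+(p)\,D(p) + [\gamma p_g(p) + (1-\gamma)\,p_m(p)]\max(0,\, a - D(p))$, a piecewise-linear function of $D(p)\in[0,+\infty)$. Minimising pointwise yields a bang-bang solution: $D^{*(h,w)}(p)=0$ when $p^+(p) > \gamma p_g(p) + (1-\gamma) p_m(p)$ and $D^{*(h,w)}(p)=a$ when the reverse strict inequality holds (any value in $[0,a]$ on the equality set). Values above $a$ are dominated because the hinge saturates while the linear $p^+\!\cdot\! D$ term keeps growing.

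Next I would substitute this $D^{*(h,w)}$ back into $L_G$. Because the last two terms of $L_G$ do not depend on the generator, its effective objective reduces to $\gamma\int D^{*(h,w)}(p)\, p_g(p)\,dp$. I then postulate $\mathbb{P}^{(h,w)}_g = \mathbb{P}^+$ and verify both Nash conditions. Writing $\gamma p^+ + (1-\gamma)[\beta p^+ + (1-\beta)p^-]$ and exploiting the disjoint supports $S^+$ and $S^-$ of $\mathbb{P}^+$ and $\mathbb{P}^-$, this mixture equals $[1-(1-\gamma)(1-\beta)]\,p^+ \le p^+$ on $S^+$ and $(1-\gamma)(1-\beta)\,p^- > 0 = p^+$ on $S^-$. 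Hence $D^{*(h,w)}$ is $0$ on $S^+$ and $a$ on $S^-$, so the generator's effective objective collapses to $\gamma a\,\Pr_{p\sim\mathbb{P}^{(h,w)}_g}[p\in S^-]$, which is minimised at zero by any $p_g$ with no mass on $S^-$, and in particular by $p_g = p^+$. Thus the postulated pair is a Nash equilibrium.

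The hard part, and what distinguishes this from the image-level Theorem~\ref{theorem:image}, is the contamination of $\mathbb{P}_m$ by normal mass: a naive attempt to drive $\gamma p_g + (1-\gamma)p_m$ to equal $p^+$ pointwise would require $p_g$ to carry \emph{negative} density on $S^-$ to cancel the $(1-\gamma)(1-\beta)\,p^-$ term, which is impossible. The reason the argument nonetheless succeeds is that the generator does not have to neutralise the anomalous mass inside $\mathbb{P}_m$: it only needs to place $p_g$ on the normal support $S^+$, and the one-sided hinge structure of $L_D$ allows $D^{*(h,w)}$ to stay saturated at $a$ on $S^-$ without any compensation from the generator. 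I would also briefly dispatch the edge case $\gamma = 1$, where the $\mathbb{P}_m$ terms drop out and the argument degenerates into the clean image-level version.
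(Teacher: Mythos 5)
Your proof is correct, but it takes a more elementary route than the paper's. The paper first proves a lemma expressing the total variation distance as an infimum over $[-1,1]$-valued test functions (via a Hahn decomposition), uses it to write the best-response discriminator's value as $L_G(D^*,G_\theta)=\frac{\gamma a}{2\eta}\|\mathbb{P}^+,\mathbb{P}'_m\|_{TV}$ where $\mathbb{P}'_m=\eta\,\mathbb{P}_g+(1-\eta)\mathbb{P}^-$ and $\eta=\gamma/(1-\beta+\beta\gamma)$ absorbs the normal mass contaminating $\mathbb{P}_m$, then derives the generator-independent lower bound $\frac{\gamma a(1-\eta)}{2\eta}$ and checks that $\mathbb{P}_g=\mathbb{P}^+$ attains it. You instead compute the bang-bang best-response discriminator by direct pointwise minimization of the hinge integrand (the same object the paper obtains through the Hahn decomposition), plug in the candidate $\mathbb{P}_g^{(h,w)}=\mathbb{P}^+$, observe via disjoint supports that $D^*$ is $0$ on $S^+$ and $a$ on $S^-$, and verify that the generator's effective objective $\gamma a\,\mathbb{P}_g(S^-)$ is already at its minimum of zero. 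Your version dispenses with the lemma and the $\eta$-reparametrization and verifies both best-response conditions of the Nash equilibrium directly and locally, which is shorter and arguably cleaner; the paper's version buys the explicit equilibrium value of the game and a lower bound holding for \emph{every} generator, i.e., a minimax-optimality statement rather than only an equilibrium check. Your remark that the generator need not cancel the anomalous mass inside $\mathbb{P}_m$ but only avoid $S^-$ is precisely the insight the paper encodes algebraically in $\eta$. One small point to make airtight: specify $D^*$ on the null region outside $S^+\cup S^-$ (e.g., set it to $a$ everywhere off $S^+$) so that the generator's best-response inequality is unambiguous; with that convention the argument, including the degenerate $\gamma=1$ case you flag, goes through.
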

\begin{proof}
\renewcommand{\qedsymbol}{}
    Please see the proof in the Supplementary Material.
\end{proof}

According to \eqref{eq:discriminator_patch}, the discriminator is designed to assign low energy to normal patch features, while assigning high energy to patch features that are generated or from anomalous images. 
Although the patch discriminator seems contradictory in the sense of encouraging to output low and high energy to normal patch features simultaneously, due to the term $\mathbb{E}_{ p \sim \mathbb{P}^+(p)}D(p)$ in \eqref{eq:discriminator_patch} has greater weight, its influence will override that of anomalous images $\mathbb{E}_{p \sim \mathbb{P}_m(p)}[0, a - D^{(h,w)}(p)]^+$, thus leading to the overall low energy for normal patches. With the discriminator capable of outputting low and high energy for normal, anomalous and generated patches, by driving the generator to generate patches with low energy, we can align the distribution of generated patches with normal ones'. Thus, by comparing the input patch features with the reconstructed ones, anomalous areas can be localized accurately.
In experiments, a common patch feature discriminator $D(p)$ is used for all patch feature generators $G^{(h, w)}(p)$.

\subsection{Training and Testing}
\paragraph{Training} For patch feature from layer $s \in \mathcal{S}$, the discriminator $D^s(p)$ can be used to distinguish patch features of all positions as \eqref{eq:discriminator_patch}. Combining discriminators of all layers that participate in the reconstruction, we can get a total discriminator loss and corresponding generator loss $\fontsize{8}{8}\selectfont L_{D} = \sum_{s \in \mathcal{S}} \frac{1}{H^s W^s} \sum_{h=1}^{H^s}\sum_{w=1}^{W^s}  L_{D}(D^s, G^{(h,w)}_{\theta})$ and $L_{G} = \sum_{s \in \mathcal{S}} \frac{1}{H^s W^s} \sum_{h=1}^{H^s}\sum_{w=1}^{W^s} L_{G} (D^s, G^{(h, w)}_{\theta})$,
where $H^s$ and $W^s$ are the height and width of the feature map $F^s$. As the structure shown in Figure \ref{fig:model}, we use the knowledge-aware adversarial learning to regularize the output features of auto-encoder, and train the model by minimizing the following two losses alternatively
\begin{align}
    \min_D L_D, \quad  \min_{\theta} L^+_{rec} + \lambda  L_{G}.
    \label{eq:ed_loss}
\end{align}


\paragraph{Testing} For any test sample $x$, we first obtain its pre-trained feature maps $F^s$, and reconstructed feature maps $\tilde{F}^s$. For pixel-level anomaly detection, we compute the value
\begin{align}
    S^s_{map}(h,w) = 1 - cos\left(F^s(h,w), \tilde{F}^s(h,w)\right)
\end{align}
of position $(h,w)$ in anomaly maps for $s\in {\mathcal{S}}$, and upsample them to the same size as the input image by bi-linear interpolation. Finally, we sum the upsampled error maps of all layers together and smooth the summed map with a Gaussian kernel, yielding the final anomaly score map $S_{map}$.
For image-level anomaly detection, we obtain the average result of the top-k values in the anomaly maps to produce the final anomaly score $S_{img}$.

\begin{table*}[t]
\centering
\setlength{\tabcolsep}{1.5mm}
{\small {\begin{tabular}{c|c|c|ccc|ccc|ccc|ccc}
\toprule
& \multirow{2}{*}{$r_l$} & \multirow{2}{*}{Methods} & \multicolumn{3}{c|}{ISIC2018} & \multicolumn{3}{c|}{Chest X-ray} & \multicolumn{3}{c|}{Br35H} & \multicolumn{3}{c}{OCT} \\
\cmidrule(r){4-6}  \cmidrule(r){7-9} \cmidrule(r){10-12} \cmidrule(r){13-15} 
& & & AUC & F1 & ACC &  AUC & F1 & ACC & AUC & F1 & ACC &  AUC & F1 & ACC \\
\midrule
\multirow{6}{*}{Unsupervised} & \multirow{6}{*}{0\%} & 
GANomaly & 72.54 & 60.95 & 56.99 & 76.14 & 81.18 & 74.36 & 84.43 & 78.89 & 75.83 & 90.52 & 91.09 & 86.16 \\
& & DifferNet & 76.52 & 64.58 & 64.25 & 86.41 & 84.62 & 79.17 & 91.66 & 84.98 & 85.33 & 94.27 & 92.85 & 89.05 \\
& & Fastflow & 80.81 & 69.95 & 70.98 & 90.25 & 86.02 & 80.77 & 93.85 & 87.67 & 86.83 & 94.08 & 92.60 & 88.43\\
& & AE-FLOW & 87.79 & 80.56 & 84.97 & \textbf{92.00} & \textbf{88.92} & \textbf{85.58} & N/A & N/A & N/A & 98.15 & 96.36 & 94.42 \\
& & ReContrast & \textbf{90.15} & \textbf{81.12} & \textbf{86.01} & 84.20 & 83.45 & 77.24 & 96.77 & 93.95 & 93.67 & \textbf{99.60} & \textbf{98.53} & \textbf{97.80}\\
& & Recon & 87.51 & 79.50 & 82.90 & 85.59 & 83.70 & 78.04 & \textbf{97.37} & \textbf{96.23} & \textbf{96.16}  & 99.33 & 97.84 & 96.80 \\
\midrule
\multirow{15}{*}{\shortstack{Weakly \\{Supervised}}} & \multirow{6}{*}{1\%} & PReNet & 73.84 & 67.02 & 69.26 & 86.66  & 84.83 &  80.45 & 81.86 & 76.66 & 75.33 & 60.26 & 85.88 & 75.83\\
& & Deep SAD & 89.52 & 77.13 & 80.83 & 86.82 & 83.40 & 79.33 & 86.83 & 81.41 & 80.67 & 93.04 & 90.93 & 86.50 \\
& & AA-BiGAN & 87.83 & 78.96 & 83.25 & 90.63 & 87.52 & 84.13 & 88.87 & 84.04 & 83.67 & 96.02 & 94.41 & 91.70 \\
& & DRA &90.10 & 78.51 & 82.04 & 92.35 & 88.36 & 85.59 & 90.41 & 84.18 & 84.33 & 91.52 & 91.34 & 86.83 \\
& & DevNet & 88.98 & 77.80 & 80.83 & 92.53 & \textbf{89.41} & 85.57 & 93.94 & 88.29 & 88.33 & 99.38 & 98.75 & 98.13 \\
& & CKAAD (Ours) & \textbf{91.02} & \textbf{82.04} & \textbf{85.84} & \textbf{93.22} & 88.77 & \textbf{86.22} & \textbf{98.89} & \textbf{97.88} & \textbf{97.83} & \textbf{99.88} & \textbf{99.33} & \textbf{99.00} \\
\cmidrule(r){2-15}
& \multirow{6}{*}{2\%} & PReNet & 74.28 & 66.87 & 68.83 & 89.69 &  87.47 & 84.29 & 82.33  & 75.00 & 76.33 & 65.25 & 85.98 & 75.90 \\
& & Deep SAD & 89.86 & 79.18 & 83.94 & 90.16 & 86.56 & 83.97 & 85.91 & 79.48 & 76.33 & 95.26 & 93.18 & 90.00 \\
& & AA-BiGAN & 88.20 & 78.10 & 83.25 & 92.13 & 89.86 & 87.02 & 84.46 & 79.87 & 79.17 & 97.15 & 95.44 & 93.20\\
& & DRA & 90.65 & 79.70 & 82.80 & 92.59 & 89.43 & 86.69 & 95.00 & 89.92 & 89.50 & 94.98 & 92.94 & 89.50 \\
& & DevNet & 89.36 & 78.80 & 83.25 & 94.27 & 90.48 & 87.66 & 94.89 & 88.05 & 88.33 & 99.31 & 98.56 & 97.83 \\
& & CKAAD (Ours) & \textbf{91.68} & \textbf{82.86} & \textbf{86.87} & \textbf{95.23} & \textbf{91.17} & \textbf{88.94} & \textbf{99.07} & \textbf{98.03} & \textbf{98.00} & \textbf{99.91} & \textbf{99.42} & \textbf{99.13} \\
\cmidrule(r){2-15}
& \multirow{6}{*}{5\%} & PReNet & 74.29 & 67.36 & 71.24 & 91.01 & 87.82 & 83.81 & 85.12 & 80.48 & 78.33 & 67.67 & 86.27 & 76.47\\
& & Deep SAD & 89.81 & 78.29 & 82.38 & 92.87 & 88.49 & 85.58 & 91.54 & 83.50 & 83.33 & 95.74 & 93.88 & 91.03\\
& & AA-BiGAN & 88.91 & 78.95 & 82.30 & 92.80 & 89.95 & 86.86 & 95.96 & 91.87 & 91.83 & 97.69 & 96.04 & 94.10 \\
& & DRA & 91.05 & 79.80 & 83.07 & 93.61 & 89.59 & 86.86 & 95.48 & 90.03& 89.67 & 91.91 & 91.39 & 86.57 \\
& & DevNet & 89.93 & 78.63 & 82.64 & 95.77 & 91.23 & 88.94 & 99.52 & 97.80& 97.83 & 99.60 & 98.93 & 98.40 \\
& & CKAAD (Ours) & \textbf{92.66} & \textbf{84.00} & \textbf{87.65} & \textbf{96.57} & \textbf{93.87} & \textbf{92.47} & \textbf{99.79} & \textbf{98.20} & \textbf{98.17} & \textbf{99.93} & \textbf{99.55} & \textbf{99.33} \\
\bottomrule
\end{tabular}}}
\caption{Performance (Average AUC(\%), best F1(\%)) and ACC(\%)) on medical datasets. With $k=1$, we increase the ratio of labeled anomalies $r_l$ in the training set from $0\%$ to $5\%$.}
\label{tab:medical}
\end{table*}

\section{Experiments}
\subsection{Experimental Setups}
\paragraph{Datasets} We mainly conduct experiments on real-world anomaly detection medical datasets and industrial datasets.
\textbf{Medical datasets:} 
{\it i)} \textbf{ISIC2018} ~\cite{tschandl2018ham10000,codella2019skin}: The ISIC2018 challenge dataset (task 3) contains 7 categories and we classify NV (nevus) as normal, the rest 6 categories as abnormal. 
{\it ii)} \textbf{Chest X-rays}~\cite{kermany2018identifying} contains normal and abnormal Chest X-rays scans.
{\it iii)} \textbf{Br35H}\cite{br35h,zhou2024anomalyclip}: Brain Tumor Detection dataset contains non-tumorous and various tumorous images.
{\it iv)} \textbf{OCT}~\cite{kermany2018identifying}: Retinal optical coherence tomography (OCT) contains normal OCT scans and three types of scans with diseases. 
\textbf{Industrial dataset:} {\it i})\textbf{MVTec AD}~\cite{bergmann2019mvtec} is a widely known industrial dataset comprising 15 classes with 5 textures and 10 objects. 
{\it ii})\textbf{Visa}~\cite{zou2022visa} is an industrial dataset containing 12 classes.

\paragraph{Implementation} All images are resized to 256. $\alpha, \gamma$ are set to $0.5$. $\lambda$ is set to $0.02$. Adam optimizer is utilized with $\beta=(0.5,0.999)$. The learning rate for the auto-encoder is set to 1e-03 for medical datasets, 5e-03 for industrial datasets, and 1e-04 for the discriminator. Resnet18 is chosen as the backbone and $\mathcal{S} = \{2, 3\}$ for medical datasets. WideResnet50 and $\mathcal{S} = \{1,2,3\}$ are set for industrial datasets because the anomalies are more subtle. 

\paragraph{Metrics} We report the AUC (Area Under receiver operating characteristic Curve), best F1, Accuracy (ACC) for medical datasets, and image-level AUC, pixel-level AUC and PRO (Per-Region Overlap) for industrial datasets. 

\paragraph{Experimental Scenarios} For medical datasets, we evaluate the performance with consideration of two key parameters for the incomplete anomalous information: 1) the number of types of collected anomalies $k$; 2) the ratio of collected anomalies $r_l$ in the training data. Specifically, we first fix $k=1$ but increase $r_l$ gradually. In this case, anomalies from any anomaly category can be used as the only labeled anomaly class during training, and performance averaged over all anomaly types is reported. Then, we fix $r_l$ but increase $k$ gradually. The $k$ anomaly types are randomly selected from all possible anomaly categories, so the performance is averaged over 5 random experiments. 
For industrial datasets, since there are no anomalies for training, elastic transformation (see details in Supplementary) is used to distort normal images to produce corresponding anomalies and the average performance of all classes is reported.

\subsection{Detection Results on Medical Datasets }

We compared our proposed methods with several unsupervised methods 
GANomaly\cite{akcay2019ganomaly}, 
DifferNet\cite{rudolph2021differnet}, Fastflow\cite{yu2021fastflow}, AE-FLOW\cite{zhao2023aeflow}, ReContrast\cite{guo2023recontrast}, and weakly-supervised methods Deep SAD\cite{ruff2019deepsad}, DevNet\cite{pang2021devnet}, AA-BiGAN\cite{tian2022aa-bigan}, DRA\cite{ding2022dra}, PReNet~\cite{pang2023deep}.
All methods are run under the same settings.

\paragraph{Performance under different ratios of labeled anomalies}
As shown in Table~\ref{tab:medical}, with $k=1$, the performance of unsupervised and weakly supervised methods is reported. Compared to the reconstruction baseline (Recon), our proposed method can enhance the reconstruction model's performance by utilizing labeled anomalies. Using only 1\% anomalies, our method improves the AUC by 3.51\% for ISIC2018, 7.63\% for Chest X-ray, 1.52\% for Br35H.
While adding $r_l$ to 2\% and 5\%, the performance continuously increases. On the other hand, compared to other weakly-supervised methods, our method exhibits the best overall performance of three metrics on all medical datasets.
  
\begin{figure}[tbp]
\centering 
\begin{minipage}{.48\textwidth} 
\centering 
{\includegraphics[width=0.98\linewidth]{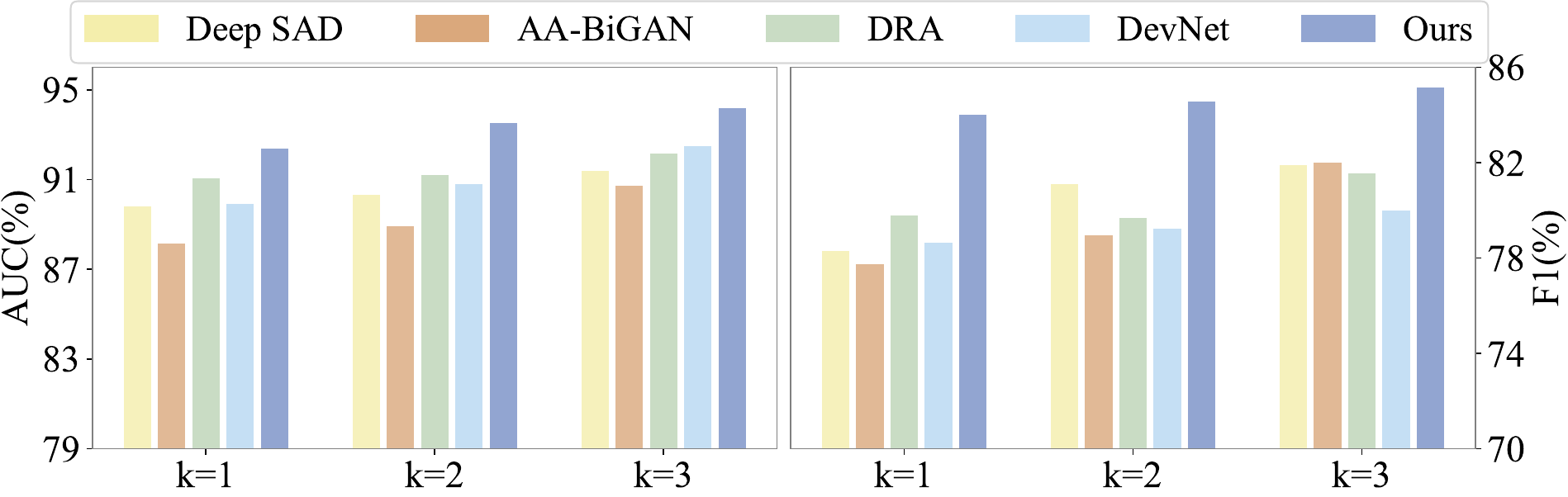}} 
\captionof{figure}{Histograms of performance (Average AUC and F1 of 5 random experiments) with different number ($k$) of labeled anomaly classes on ISIC2018 under $r_l=5\%$.} 
\label{fig:isic_k} 
\end{minipage} 
\end{figure}


\begin{table}[tbp]
    \centering
     \setlength{\tabcolsep}{1mm}
    {\small{
    \begin{tabular}{ccccccc}
    \toprule
       Dataset  &  \multicolumn{3}{c}{MVTec} & \multicolumn{3}{c}{Visa} \\
       \cmidrule(r){1-1} \cmidrule(r){2-4} \cmidrule(r){5-7} 
       Metric & I-AUC & P-AUC & PRO & I-AUC & P-AUC & PRO \\
       \midrule
       PaDiM & 95.4 & 97.5 & 92.1 & 89.1 & 98.1 & 85.9 \\
       DRAEM & 98.0 & 97.3 & 93.0 & 88.7 & 93.5 & 73.1 \\
       RD4AD & 98.5 & 97.8 & 93.9 & 96.0 & 90.1 & 70.9 \\
       PatchCore & 99.1 & 98.1 & 93.4 & 95.1 & \textbf{98.8} & 91.2 \\
       DesTseg & 98.6 & 97.9 & 92.6 & 92.3 & 98.4 & 92.3 \\ 
       RD++ & 99.4 & 98.3 & 95.0 & 95.9 & 98.7 & 93.4\\
       DMAD & \textbf{99.5} & 98.2 & 90.6 & 95.5 & 98.6 & 91.3 \\
       D3AD & 97.2 & 97.4 & 93.3 & 96.0 & 97.9 & \textbf{94.1} \\
       CKAAD (Ours) & \textbf{99.5} & \textbf{98.4} & \textbf{95.2} & \textbf{96.7} & \textbf{98.8} & 94.0 \\
    \bottomrule
    \end{tabular}}}
     \caption{Performance on industrial datasets. Average Image-level / Pixel-level AUC (I-AUC / P-AUC) and PRO of all classes are reported.}
    \label{tab:industrial}
\end{table}

\paragraph{Performance under different types of labeled anomalies}
As Figure \ref{fig:isic_k} shows, we evaluate the performance with different numbers of labeled anomaly types on ISIC2018 which has 6 kinds of anomalies. With only 5\% anomalies, the type $k$ is increased from 1 to 3, and the histogram exhibits the average AUC and best F1 of 5 weakly-supervised methods. With the increasing number of observed anomaly types, weakly-supervised methods can continuously improve detection performance. Among them, our method gets the best overall performance under all $k$ settings.

\subsection{Detection Results on Industrial Datasets}
We also evaluate fine-grained anomaly detection on industrial datasets and 
report the performance in Table \ref{tab:industrial}.
Compared to recent state-of-the-art approaches 
PaDim~\cite{defard2021padim},
DRAEM~\cite{zavrtanik2021draem},
RD4AD~\cite{deng2022RD}, 
PatchCore~\cite{roth2022patchcore},
DesTseg~\cite{zhang2023destseg}, 
RD++~\cite{tien2023revisiting},
DMAD~\cite{liu2023diversity_DMAD} and
D3AD~\cite{Tebbe_2024_CVPR_D3AD},
our method still gets competitive performance in both image-level and pixel-level detection, demonstrating our effectiveness in fine-grained anomaly detection.

\subsection{Ablation Study}

There are several strategies of utilizing anomalies, including the Recon \eqref{eq:gcd}, ReconSub \eqref{eq:L_rec}, pure GAN \eqref{eq:origin_gan}, coarse-knowledge-aware image-level adversarial learning (CKAImg)\eqref{eq:discriminator_image}\eqref{eq:generator_image} and patch-level adversarial learning (CKAPatch)\eqref{eq:discriminator_patch}\eqref{eq:generator_patch}. To compare these strategies, we conduct experiments with $r_l=5\%, k=1$ for medical datasets and distorted anomalies for industrial datasets.
As shown in Table \ref{tab:compare_loss}, compared to Recon, ReconSub can decrease performance a lot on MVTec, indicating it is unsuitable for fine-grained anomaly localization. Compared to pure GAN, CKAImg improves the detection performance more, that is because the energy discriminator is knowledge-aware of anomalies, leading the reconstructed feature to be better aligned with normal ones. Furthermore, employing CKAPatch can make the discriminator be better aware of anomalous areas, thus guiding the auto-encoder to output normal patch features more surely, further boosting the detection performance. 


\begin{table}[t]
    \centering
    \setlength{\tabcolsep}{1mm}
    {\small{
    \begin{tabular}{cccccccc}
    \toprule
        \multirow{2}{*}{Strategy} & \multicolumn{2}{c}{ISIC2018} & \multicolumn{2}{c}{Chest X-Ray} & \multicolumn{3}{c}{MVTec} \\
         \cmidrule(lr){2-3} \cmidrule(lr){4-5}  \cmidrule(lr){6-8} 
         & AUC & F1 & AUC& F1 & I-AUC & P-AUC & PRO \\
        \cmidrule(lr){1-1} \cmidrule(lr){2-3} \cmidrule(lr){4-5}  \cmidrule(lr){6-8}
        Recon & 87.5 & 79.5 & 85.6&83.7 & 99.1&98.2&94.4 \\
        ReconSub & 89.2&79.0 & 92.3&87.3 & 91.2&95.0&86.2 \\
        GAN & 91.0&82.0 & 95.2&91.1 & 99.2&98.2&94.9 \\
        CKAImg & 91.4&82.3 & 96.0&92.0 & 99.3&98.3&95.1 \\
        CKAPatch & \textbf{92.6}&\textbf{84.0} & \textbf{96.6}&\textbf{93.9}  & \textbf{99.5}&\textbf{98.4}&\textbf{95.2} \\
       \bottomrule
    \end{tabular}}
    }
 \caption{Ablation of different strategies.}
 \label{tab:compare_loss}
\end{table}

\begin{figure}[tbp]
    \centering
\subfigure[normal]{%
    \includegraphics[width=0.40\linewidth]{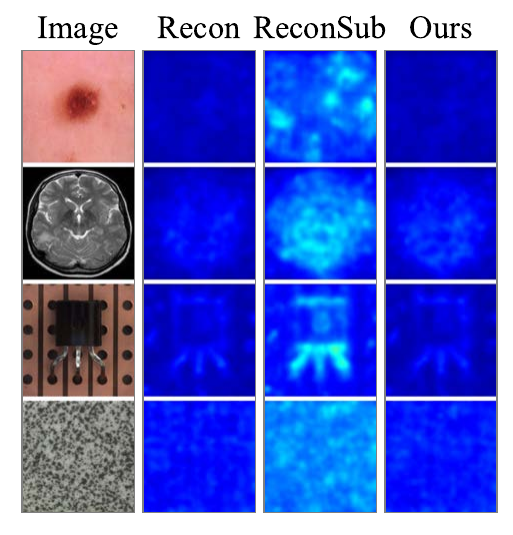}%
}
\subfigure[abnormal]{%
    \includegraphics[width=0.54\linewidth]{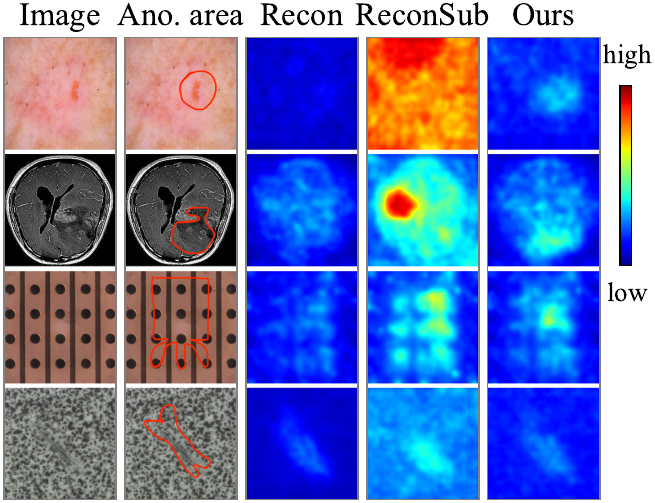}%
}
\caption{Visualization of detection results. Images, anomaly maps of Recon, ReconSub, and our method are shown in the first column and last three columns respectively. The second column in (b) circles anomalous areas by red lines.}
\label{fig:ablation}
\end{figure}

Figure \ref{fig:ablation} exhibits image error maps of different strategies. The reconstruction quality can be reflected by the error maps in which high values represent the area not reconstructed well. The visualization result shows that only reconstructing normal samples can also generalize to anomalies, thus anomalous areas can not be detected in the error maps. ReconSub can negatively influence the reconstruction quality of normal samples while not accurately localizing the anomalous areas. The proposed patch-level knowledge-aware method can reconstruct normal images well while having the ability to localize the anomalous areas.

\section{Conclusion}

In this paper, we propose to leverage a small coarse-labeled anomaly dataset to boost the anomaly detection performance of the feature reconstruction model. 
To achieve our goal, we first introduce the energy-based discriminators which can be aware of anomalous information to guide the feature alignment at image-level, then transfer the alignment to patch-level by only using the coarse image labels to better localize the anomalous area.
Experimental results on medical and industrial datasets prove the effectiveness of our method.

\section*{Acknowledgments}
This work is supported by the National Natural Science Foundation of China (No. 62276280, 62276279), Guangzhou Science and Technology Planning Project (No. 2024A04J9967), Guangdong Basic and Applied Basic Research Foundation (2024B1515020032).


\bibliography{aaai25}

\appendix
\setcounter{theorem}{0}


\twocolumn[
\begin{@twocolumnfalse}
      \centering%
      {\LARGE\bf Supplementary Material of Boosting Fine-Grained Visual Anomaly Detection with \\ Coarse-Knowledge-Aware Adversarial Learning \par}%
      \vspace{15mm}
\end{@twocolumnfalse}
]

\section{Theoretical Analysis}
\label{appedix:theoretical_analysis}

Before the proof of theorems, we first introduce Lemma \ref{lemma:appendix_lemma1}.
\newtheorem{lemma}{Lemma}

\begin{lemma}
    If $\mathbb{P}_1$ and $\mathbb{P}_2$ are probability densities, $\|\mathbb{P}_1, \mathbb{P}_2\|_{TV}$ is the total variation distance of the two distributions, $f(x): \mathcal{X} \rightarrow [-1, 1]$, then we have
    \begin{align}
         \|\mathbb{P}_1, \mathbb{P}_2\|_{TV} = -\inf_{-1 \leq f(x) \leq 1}\mathbb{E}_{x \sim \mathbb{P}_1}[f(x)] - \mathbb{E}_{x\sim \mathbb{P}_2}[f(x)].
    \end{align}
    And there exist a $f^*$ such that $\|\mathbb{P}_1, \mathbb{P}_2\|_{TV} = -\left\{\mathbb{E}_{x \sim \mathbb{P}_1}[f^*(x)] - \mathbb{E}_{x\sim \mathbb{P}_2}[f^*(x)]\right\}$.
    \label{lemma:appendix_lemma1}
\end{lemma}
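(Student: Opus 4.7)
The plan is to verify the variational representation of the total variation distance by first proving an upper bound that holds uniformly over admissible $f$, then exhibiting an explicit maximiser that attains it. Throughout I would work with densities $p_1$ and $p_2$ of $\mathbb{P}_1$ and $\mathbb{P}_2$ with respect to a common dominating measure (e.g.\ $\mathbb{P}_1+\mathbb{P}_2$), so that the quantity used by the paper can be written as $\|\mathbb{P}_1,\mathbb{P}_2\|_{TV} = \int |p_1(x)-p_2(x)|\,dx$ under the (unnormalised) convention that makes the identity in the statement numerically correct.

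First I would rewrite the target quantity using linearity of expectation:
\begin{equation*}
-\bigl(\mathbb{E}_{\mathbb{P}_1}[f] - \mathbb{E}_{\mathbb{P}_2}[f]\bigr) = \int f(x)\bigl(p_2(x)-p_1(x)\bigr)\,dx.
\end{equation*}
Hence minimising $\mathbb{E}_{\mathbb{P}_1}[f]-\mathbb{E}_{\mathbb{P}_2}[f]$ over $-1\le f\le 1$ is equivalent to maximising $\int f(x)(p_2(x)-p_1(x))\,dx$ over the same class. Because $|f(x)|\le 1$ pointwise, the integral is bounded pointwise by $|p_2(x)-p_1(x)|$, so the integral is at most $\int |p_2(x)-p_1(x)|\,dx = \|\mathbb{P}_1,\mathbb{P}_2\|_{TV}$. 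This gives the $\le$ direction uniformly in $f$.

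Next I would exhibit the attaining $f^*$. Setting $f^*(x) = \operatorname{sign}\bigl(p_2(x)-p_1(x)\bigr)$, with $f^*(x)=0$ on the $\{p_1=p_2\}$-set, one has $f^*(x)\in[-1,1]$, $f^*$ is measurable, and
\begin{equation*}
\int f^*(x)\bigl(p_2(x)-p_1(x)\bigr)\,dx = \int |p_2(x)-p_1(x)|\,dx = \|\mathbb{P}_1,\mathbb{P}_2\|_{TV}.
\end{equation*}
Translating back to the original formulation gives $\mathbb{E}_{\mathbb{P}_1}[f^*] - \mathbb{E}_{\mathbb{P}_2}[f^*] = -\|\mathbb{P}_1,\mathbb{P}_2\|_{TV}$, so the infimum is attained at $f^*$ and equals $-\|\mathbb{P}_1,\mathbb{P}_2\|_{TV}$. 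Combining the two directions proves both the main identity and the existence of the attaining $f^*$ claimed in the second half of the lemma.

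There is no substantial obstacle here beyond selecting the sign-type maximiser and confirming its measurability, which is automatic once the densities are taken with respect to a common dominating measure via the Radon--Nikodym theorem. The one point requiring care is the overall scaling: if the paper were to normalise the total variation distance with a factor of $\tfrac{1}{2}$, the supremum would instead match $\sup_{|f|\le 1}\tfrac{1}{2}[\mathbb{E}_{\mathbb{P}_2}[f]-\mathbb{E}_{\mathbb{P}_1}[f]]$, but the structure of the argument is identical. The result is the classical variational characterisation of total variation, and I would invoke it as a black box when proving Theorems~1 and~2.
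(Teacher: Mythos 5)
Your proposal is correct and follows essentially the same route as the paper's proof: bound $\bigl|\mathbb{E}_{\mathbb{P}_1}[f]-\mathbb{E}_{\mathbb{P}_2}[f]\bigr|$ pointwise using $|f|\le 1$, then attain the bound with the extremal witness, your $f^*=\operatorname{sign}(p_2-p_1)$ being exactly the paper's $f^*=\mathds{1}_{Q}-\mathds{1}_{P}$ built from the Hahn decomposition of $\mu=\mathbb{P}_1-\mathbb{P}_2$. Your remark about the normalization convention is well taken, since the lemma's proof uses $\|\mu\|_{TV}=|\mu|(\mathcal{X})=\int|p_1-p_2|$ while the downstream theorems insert a factor of $\tfrac12$; as you note, this does not change the structure of the argument.
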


\begin{proof}
    \renewcommand{\qedsymbol}{}
    If $f$ is bounded between -1 and 1, take $\mu = \mathbb{P}_1 - \mathbb{P}_2$, which is a signed measure, we get
    \begin{align}
        & |\mathbb{E}_{x \sim \mathbb{P}_1}[f(x)] -\mathbb{E}_{x \sim \mathbb{P}_2}[f(x)]| \nonumber \\
        = &|\int f d\mathbb{P}_1 - \int f d\mathbb{P}_2| \nonumber\\
        = &|\int f d\mu| \nonumber \\
        \leq &\int |f|d|\mu| \leq \int 1 d|\mu| \nonumber \\
        = & |\mu| = \|\mathbb{P}_1 - \mathbb{P}_2\|_{TV}.
    \end{align}
    Since $\|\cdot\|_{TV} \geq 0$, we can conclude $\mathbb{E}_{x \sim \mathbb{P}_1}[f(x)] - \mathbb{E}_{p\sim \mathbb{P}_2}[f(x)] \geq - \|\mathbb{P}_1, \mathbb{P}_2\|_{TV}$.
    
   Assuming $(P, Q)$ is Hahn decomposition of $\mu$. Then , we can define $f^* = \mathds{1}_{Q} - \mathds{1}_{P}$. By construction, then
    \begin{align}
        & \mathbb{E}_{p \sim \mathbb{P}_1}[f^*(x)] -\mathbb{E}_{x \sim \mathbb{P}_2}[f^*(x)] \nonumber \\
        = & \int f^* d\mu = \mu(Q) - \mu(P) \nonumber \\
        = & -(\mu(P) - \mu(Q)) = -\|\mu\|_{TV} = -\|\mathbb{P}_1, \mathbb{P}_2\|_{TV}
    \end{align}
    Thus, there exists $f^*$ such that $\|\mathbb{P}_1, \mathbb{P}_2\|_{TV} = -\left\{\mathbb{E}_{x \sim \mathbb{P}_1}[f^*(x)] - \mathbb{E}_{x\sim \mathbb{P}_2}[f^*(x)]\right\}$.
\end{proof}

\begin{theorem}
Let $\mathbb{P}^+(F)$, $\mathbb{P}^-(F)$ and $\mathbb{P}_g(F)$ be the distributions of normal, anomalous, and generated feature maps. Assuming the $\mathbb{P}^+(F)$ and $\mathbb{P}^-(F)$ are two disjoint distributions, and $\gamma \in (0, 1]$,
when Discriminator $D(\cdot): \mathcal{F}\rightarrow [0, +\infty)$ and generator $G(\cdot)$ are updated according to 
\begin{align}
      L_{D}(D, G_{\theta}) &= \mathbb{E}_{ F \sim \mathbb{P}^+\!(F)}D(F) \!+ \!\gamma \mathbb{E}_{F \sim \mathbb{P}_g\!(F)}[0, a\! -\! D(F)]^+ \nonumber \\
        & + (1-\gamma)\mathbb{E}_{F \sim \mathbb{P}^-(F)}[0, a - D(F)]^+,
    \label{eq:appendix_discrminator_image_theorem}
\end{align}
\begin{align}
      L_{G}(D, G_{\theta}) & = \gamma \mathbb{E}_{ F \sim \mathbb{P}_g(F)}D(F) + (1-\gamma)\mathbb{E}_{F \sim \mathbb{P}^-(F)}D(F) \nonumber \\
    &- \mathbb{E}_{ F \sim \mathbb{P}^+(F)}D(F),
    \label{eq:appendix_generator_image_theorem}
\end{align}
there exists a Nash equilibrium of the system $(D^*, G^*)$ such that $\mathbb{P}_g = \mathbb{P}^+$.
\end{theorem}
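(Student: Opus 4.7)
The plan is to exploit the bilinear-plus-hinge structure of $L_D$ to compute the best-response discriminator in closed form at every point $F$, propose the candidate equilibrium $\mathbb{P}_g^{*} = \mathbb{P}^+$ together with a compatible $D^{*}$ read off from that formula, and then verify the two one-sided Nash conditions directly. Because the integrand in $L_D$ decouples across $F$, the first substep reduces to minimizing the piecewise-linear function $\phi(d) := p\,d + q\,[0, a - d]^{+}$ on $d \in [0, +\infty)$, where I abbreviate $p := \mathbb{P}^+(F)$ and $q := \gamma\mathbb{P}_g(F) + (1-\gamma)\mathbb{P}^-(F)$. A short case split at $d = a$ shows the minimum is attained at $d^{*} = 0$ when $p \geq q$ and at $d^{*} = a$ when $p < q$, so every pointwise best response takes values in $\{0, a\}$ with indicator set $A := \{F : p < q\}$. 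Substituting this $D^{*}$ back into $L_G$ gives $L_G(D^{*}, G) = a\big[\gamma\mathbb{P}_g(A) + (1-\gamma)\mathbb{P}^-(A) - \mathbb{P}^+(A)\big]$, which by Lemma~\ref{lemma:appendix_lemma1} applied with the indicator test function $f = \mathds{1}_{A^{c}} - \mathds{1}_{A}$ is proportional to $\|\mathbb{P}^+ - \gamma\mathbb{P}_g - (1-\gamma)\mathbb{P}^-\|_{TV}$; this yields a clean variational reading of what the generator is implicitly minimizing, although the equality is not strictly needed for the fixed-point step.

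To verify the candidate I would set $G^{*} = \mathbb{P}^+$ and use Step~1 to pick a compatible $D^{*}$. The disjointness assumption makes this immediate: on $\mathrm{supp}(\mathbb{P}^+)$ we have $p = \mathbb{P}^+ \geq \gamma\mathbb{P}^+ = q$, so $D^{*} = 0$ there; on $\mathrm{supp}(\mathbb{P}^-)$ we have $p = 0 < (1-\gamma)\mathbb{P}^- = q$ whenever $\gamma < 1$, forcing $D^{*} = a$ there; off both supports both densities vanish and either value in $\{0, a\}$ works. The Nash condition on $D^{*}$ then holds by construction. For the Nash condition on $G^{*}$, observe that with $D^{*}$ frozen the only $G$-dependent piece of $L_G$ is $\gamma \int D^{*}(F)\,\mathbb{P}_g(F)\,dF$; since $D^{*} \geq 0$ everywhere and $D^{*} \equiv 0$ on $\mathrm{supp}(\mathbb{P}^+)$, this integral is bounded below by $0$, and $\mathbb{P}_g = \mathbb{P}^+$ attains that bound by being supported inside $\{D^{*} = 0\}$. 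Hence $(D^{*}, G^{*})$ is a Nash equilibrium with $\mathbb{P}_g^{*} = \mathbb{P}^+$, as required.

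The main obstacle I expect is the boundary case $p = q$, where $\phi$ is constant on $[0, a]$ and the pointwise best response is genuinely non-unique; when $\gamma = 1$ this covers all of $\mathrm{supp}(\mathbb{P}^+)$ at the candidate pair, so the argument must explicitly commit to the selection ``$D^{*} = 0$ whenever $p \geq q$'' in order to keep $\mathbb{P}^+$ inside the generator's best-response set. A related caveat worth flagging is that, because $\mathbb{P}^+$ and $\mathbb{P}^-$ are disjoint, the equation $\mathbb{P}^+ = \gamma\mathbb{P}_g + (1-\gamma)\mathbb{P}^-$ has no probability-density solution when $\gamma < 1$, so the TV functional from Step~1 is \emph{not} driven to zero at the equilibrium; the theorem only asserts mutual best response at the pair $(D^{*}, G^{*})$, which the Nash framing delivers even though the global minimax value is strictly positive.
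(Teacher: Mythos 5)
Your proposal is correct, and it reaches the conclusion by a route that differs from the paper's in a useful way. On the discriminator side the two arguments coincide in substance: your pointwise minimization of $\phi(d)=p\,d+q\,[0,a-d]^{+}$, yielding a best response valued in $\{0,a\}$ with switching set $A=\{p<q\}$, is exactly the concrete form of what the paper obtains abstractly by clipping $D$ to $[0,a]$, substituting $D=\tfrac{a}{2}f+\tfrac{a}{2}$, and invoking its total-variation lemma (whose optimizer $f^{*}=\mathds{1}_{Q}-\mathds{1}_{P}$ from the Hahn decomposition is precisely your indicator discriminator). The divergence is in how the equilibrium is certified. The paper runs a global minimax argument: it evaluates $L_{G}(D^{*}(G),G)=\tfrac{a}{2}\|\mathbb{P}^{+},\gamma\mathbb{P}_{g}+(1-\gamma)\mathbb{P}^{-}\|_{TV}$, lower-bounds this over all generators by $\tfrac{a(1-\gamma)}{2}$ using disjointness, and checks that $\mathbb{P}_{g}=\mathbb{P}^{+}$ attains the bound. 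You instead fix the candidate pair and verify the two one-sided best-response conditions directly, with the generator side reduced to the observation that $\gamma\int D^{*}\,d\mathbb{P}_{g}\ge 0$ and vanishes when $\mathbb{P}_{g}$ is supported in $\{D^{*}=0\}\supseteq\mathrm{supp}(\mathbb{P}^{+})$. Your version is the more literal proof of the Nash claim as stated (the paper's chain of inequalities certifies minimax optimality of $G^{*}$ against the \emph{re-optimized} discriminator $D^{*}(G)$, which is a slightly different assertion than mutual best response at a fixed pair), while the paper's version buys the quantitative equilibrium value $\tfrac{a(1-\gamma)}{2}$ and an interpretation of the generator objective as a total-variation distance. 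Your two caveats are both accurate and well placed: the tie $p=q$ on $\mathrm{supp}(\mathbb{P}^{+})$ when $\gamma=1$ genuinely forces the selection $D^{*}=0$ there for the generator's best-response set to contain $\mathbb{P}^{+}$, and the residual value being strictly positive for $\gamma<1$ matches the paper's $\tfrac{a(1-\gamma)}{2}$.
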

\begin{proof}
\renewcommand{\qedsymbol}{}
\label{proof:theorem1}
    First, we prove that there exists an optimal discriminator. $D: \mathcal{F}\rightarrow [0, +\infty)$ is a measurable function composed by a neural network, mapping the feature map $F$ into a real value. Let $D'(p) = min(D(F), a)$, then, we has $L_D(D', G_{\theta}) \leq L_D(D, G_\theta)$. Therefore, a function $D^*: \mathcal{P}\rightarrow[0, +\infty)$ is optimal if and only if $(D^*)'$ is. 
    Furthermore, it is optimal if and only if $L_D(D^*, G_{\theta}) \leq L_D(D, G_\theta)$ for all $D: \mathcal{F} \rightarrow [0, a]$. 
    So we consider if there is an optimal discriminator for the problem $\min_{0 \leq D(F) \leq a} {L_{D}(D, G_{\theta})}$.

    If $0 \leq D(F) \leq a$, we have
    \begin{align}
        L_{D}(D, G_{\theta}) &= \mathbb{E}_{ F \sim \mathbb{P}^+\!(F)}D(F) \!+\! \gamma \mathbb{E}_{F \sim \mathbb{P}_g\!(F)}[0, a \!-\! D(F)]^+ \nonumber \\
        & + (1-\gamma)\mathbb{E}_{F \sim \mathbb{P}^-(F)}[0, a - D(F)]^+ \nonumber \\
        = & a + \mathbb{E}_{F \sim \mathbb{P}^+(F)} - \gamma \mathbb{E}_{F \sim \mathbb{P}_g(F)}D(F) \nonumber \\
        & - (1-\gamma) \mathbb{E}_{F \sim \mathbb{P}^-(F)}D(F)
    \end{align}
Minimizing $L_D(D, G_{\theta})$ is equivalent to finding for the lower bound of $L_D(D, G_{\theta})$, we define $\mathbb{P}'_m(F) = \gamma\mathbb{P}_g(F) + (1-\gamma)\mathbb{P}^-(F), D(F) = \frac{a}{2}f(F) + \frac{a}{2}$, thus
\begin{align}
    & \inf_{0 \leq D(F) \leq a}L_D(D, G_{\theta}) \nonumber \\
    = & a + \inf_{0 \leq D(F) \leq a}\{\mathbb{E}_{F \sim \mathbb{P}^+(F)}D(F) - \gamma \mathbb{E}_{F \sim \mathbb{P}_g(F)}D(F) \nonumber \\
    & - (1-\gamma) \mathbb{E}_{F \sim \mathbb{P}^-(F)}D(F)\} \nonumber \\
    = & a + \inf_{0\leq D(F) \leq a}\{\mathbb{E}_{F \sim \mathbb{P}^+(F)}D(F) - \mathbb{E}_{F \sim \mathbb{P}'_m(F)}D(F)\} \nonumber \\
    = & a + \inf_{-1 \leq f(F) \leq 1}\{\mathbb{E}_{F \sim \mathbb{P}^+(F)}f(F) - \mathbb{E}_{F \sim \mathbb{P}'_m(F)}f(F)\}.
\end{align}
According to Lemma \ref{lemma:appendix_lemma1}, there exists an optimal $f^*(F): \mathcal{F} \rightarrow [-1, 1]$ such that $\inf_{-1 \leq f(F) \leq 1}\{\mathbb{E}_{F \sim \mathbb{P}^+(F)}f(F) - \mathbb{E}_{F \sim \mathbb{P}'_m(F)}f(F)\} = -\|\mathbb{P}^+, \mathbb{P}'_m\|_{TV}$. The optimal discriminator is $D^*(F) = \frac{a}{2}f^*(F) + \frac{a}{2}$. Thus we have
\begin{align}
    L_D(D^*, G_{\theta}) = a - \frac{a}{2}\|\mathbb{P}^+, \mathbb{P}'_m\|_{TV}.
\end{align}
Minimizing \eqref{eq:appendix_generator_image_theorem} is to calculate to total variation distance between the distribution $\mathbb{P}^+$ and mixed distribution $\mathbb{P}'_m$. Bring the optimal $D^*$ into $L_G(D, G_{\theta})$, we can get
\begin{align}
    L_G(D^*, G_{\theta}) = -L_D(D^*, G_{\theta}) + a = \frac{a}{2}\|\mathbb{P}^+, \mathbb{P}'_m\|_{TV}.
\end{align}
Assuming $\mathbb{P}^+(F)$ and $\mathbb{P}^-(F)$ are two disjoint distributions, we divide the entire feature space $\mathcal{F}$ into two subspaces: $\mathcal{P}^+ = Supp(\mathbb{P}^+)$, $\mathcal{P}^- = \mathcal{P} \setminus \mathcal{P}^+$. On the subspace $\mathcal{P}^+$, $\mathbb{P}^- = 0, \int_{\mathcal{P}^+}\mathbb{P}^+ dF = 1 \geq \int_{\mathcal{P}^+}\mathbb{P}_g dF$. On the subspace $\mathcal{P}^-, \int_{\mathcal{P}^-}\mathbb{P}dF = 1, \mathbb{P}^+ = 0$. Thus we have

\begin{align}
    & L_G(D^*, G_{\theta}) \nonumber \\
    = & \frac{a}{2}\|\mathbb{P}^+, \mathbb{P}'_m\|_{TV} = \frac{a}{4}\int_{\mathcal{P}}|\mathbb{P}^+ -\gamma \mathbb{P}_g - (1-\gamma)\mathbb{P}^-| dF \nonumber \\
    = & \frac{a}{4}[\int_{\mathcal{P}^+}|\mathbb{P}^+ -\gamma \mathbb{P}_g| dF + \int_{\mathcal{P}^-}|\gamma \mathbb{P}_g + (1-\gamma)\mathbb{P}^-| dF]. \\
\end{align}
In the subspace $\mathcal{P}^+$, we have 
\begin{align}
    & \int_{\mathcal{P}^+}|\mathbb{P}^+ -\gamma \mathbb{P}_g| dF \nonumber \\
    \geq & |\int_{\mathcal{P}^+}\mathbb{P}^+ -\gamma \mathbb{P}_g dF| \nonumber \\
    = & |\int_{\mathcal{P}^+} \mathbb{P}^+ dF - \gamma \int_{\mathcal{P}^+}\mathbb{P}_g dF| \nonumber \\
    = & \int_{\mathcal{P}^+} \mathbb{P}^+ dF - \gamma \int_{\mathcal{P}^+}\mathbb{P}_g dF
\end{align}
Thus, we have
\begin{align}
    & L_G(D^*, G_{\theta}) \nonumber \\
    \geq & \frac{a}{4}[\int_{\mathcal{P}^+} \gamma(\mathbb{P}^+ - \mathbb{P}_g) + (1-\gamma)\mathbb{P}^+ dF \nonumber \\
    & + \int_{\mathcal{P}^-} (\gamma \mathbb{P}_g + (1-\gamma)\mathbb{P}^-) dF] \nonumber \\
    = & \frac{a}{4}[\gamma \int_{\mathcal{P}^+}(\mathbb{P}^+ - \mathbb{P}_g) dF + (1-\gamma)\int_{\mathcal{P}^+}\mathbb{P}^+dF \nonumber \\
    & + \gamma \int_{\mathcal{P}^-} \mathbb{P}_g dF + (1-\gamma)\int_{\mathcal{P}^-}\mathbb{P}^- dF] \nonumber \\
    \geq & \frac{a}{4}[0 + (1-\gamma) + 0 + (1-\gamma)] = \frac{a(1-\gamma)}{2}. 
\end{align}

Bring $G^*_{\theta}$ such that $\mathbb{P}_g = \mathbb{P}^+$ into $L_G(D^*. G_{\theta})$, we can get
\begin{align}
    L_G(D^*, G^*_{\theta}) = & \frac{a}{4}\int_{\mathcal{P}}|\mathbb{P}^+ - \gamma \mathbb{P}^+ - (1-\gamma)\mathbb{P}^-| dF \nonumber \\
    = & \frac{a}{4}[(1-\gamma)\int_{\mathcal{P}^+} \mathbb{P}^- dF (1-\gamma)\int_{\mathcal{P}^-} \mathbb{P}^+ dF] \nonumber \\
    = & \frac{a}{4}[(1-\gamma) + (1-\gamma)] = \frac{a(1-\gamma)}{2}
\end{align}
The optimal $G^*$ can make loss $L_G$ be equal to its lower bound, so we conclude that there exists a Nash equilibrium system $(D^*, G^*)$ such that $\mathbb{P}_g = \mathbb{P}^+$.

\end{proof}

\begin{theorem}
Let $\mathbb{P}^+(p)$, $\mathbb{P}^-(p)$ and $\mathbb{P}^{(h, w)}_g(p)$ be the distributions of normal, anomalous, and generated patch features, $\mathbb{P}_m(p) = \beta \mathbb{P}^+(p) + (1-\beta)\mathbb{P}^-(p), \beta \in (0, 1)$ is the unknown mixed ratio. Assuming the $\mathbb{P}^+(p)$ and $\mathbb{P}^-(p)$ are two disjoint distributions, and $\gamma \in (0, 1]$,
when Discriminator  $D^{(h, w)}(\cdot): \mathcal{P}\rightarrow [0, +\infty)$  and generator $G^{(h, w)}(\cdot)$ are updated according to
{\small
\begin{align}
       L_{D}(D^{(h, w)}, G^{(h, w)}_{\theta}) = 
     & \gamma \mathbb{E}_{p \sim \mathbb{P}^{(h, w)}_g(p)}[0, a - D^{(h,w)}(p)]^+ \nonumber \\
    & + (1-\gamma)\mathbb{E}_{p \sim \mathbb{P}_m(p)}[0, a - D^{(h,w)}(p)]^+ \nonumber \\
    & + \mathbb{E}_{ p \sim \mathbb{P}^+(p)}D^{(h,w)}(p), 
    \label{eq:appendix_discrminator_theorem_patch}
\end{align}
\begin{align}
     L_{G}(D^{(h,w)}, G^{(h,w)}_{\theta}) & = \gamma \mathbb{E}_{p \sim \mathbb{P}^{(h,w)}_g(p)}D^{(h,w)}(p) \nonumber \\
    & + (1-\gamma)\mathbb{E}_{p \sim \mathbb{P}_m(p)}D^{(h,w)}(p) \nonumber \\
    & -  \mathbb{E}_{ p \sim \mathbb{P}^+(p)}D^{(h,w)}(p), 
    \label{eq:appendix_generator_theorem_patch}
\end{align}
}
there exists a Nash equilibrium of the system $(D^{*(h,w)}, G^{*(h,w)})$ such that $\mathbb{P}^{(h,w)}_g = \mathbb{P}^+$.
\end{theorem}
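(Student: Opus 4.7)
The plan is to mirror the proof of Theorem 1, adapting the final total-variation bookkeeping to the noisy mixture $\mathbb{P}_m = \beta\mathbb{P}^+ + (1-\beta)\mathbb{P}^-$. First, I would fix $G^{(h,w)}_\theta$ and restrict attention to discriminators $D:\mathcal{P}\to[0,a]$, which is without loss of generality since truncating $D$ at $a$ never increases $L_D$. On this range $[0, a - D(p)]^+ = a - D(p)$, so
\begin{align*}
L_D(D, G^{(h,w)}_\theta) = a + \mathbb{E}_{\mathbb{P}^+}D(p) - \gamma\,\mathbb{E}_{\mathbb{P}_g^{(h,w)}}D(p) - (1-\gamma)\mathbb{E}_{\mathbb{P}_m}D(p).
\end{align*}
Defining the composite distribution $\mathbb{P}'_m := \gamma\mathbb{P}_g^{(h,w)} + (1-\gamma)\mathbb{P}_m$ and reparameterizing $D(p) = \tfrac{a}{2}f(p) + \tfrac{a}{2}$ with $f:\mathcal{P}\to[-1,1]$, Lemma 1 gives an optimal $D^*$ with $L_D(D^*, G^{(h,w)}_\theta) = a - \tfrac{a}{2}\|\mathbb{P}^+, \mathbb{P}'_m\|_{TV}$. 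Plugging this back into the generator loss yields $L_G(D^*, G^{(h,w)}_\theta) = \tfrac{a}{2}\|\mathbb{P}^+, \mathbb{P}'_m\|_{TV}$, just as in Theorem 1.

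Next I would expand this TV distance by partitioning $\mathcal{P} = \mathcal{P}^+\cup\mathcal{P}^-$ with $\mathcal{P}^+ := \mathrm{Supp}(\mathbb{P}^+)$ and exploiting disjointness of $\mathbb{P}^+$ and $\mathbb{P}^-$, giving
\begin{align*}
L_G(D^*, G^{(h,w)}_\theta) = \frac{a}{4}\int_{\mathcal{P}^+}\!\bigl|(1-(1-\gamma)\beta)\mathbb{P}^+ - \gamma\mathbb{P}_g^{(h,w)}\bigr|dp + \frac{a}{4}\int_{\mathcal{P}^-}\!\bigl(\gamma\mathbb{P}_g^{(h,w)} + (1-\gamma)(1-\beta)\mathbb{P}^-\bigr)dp.
\end{align*}
Let $c := \int_{\mathcal{P}^+}\mathbb{P}_g^{(h,w)}dp \in [0,1]$; the second integral evaluates exactly to $\gamma(1-c) + (1-\gamma)(1-\beta)$, and the triangle inequality bounds the first from below by $\bigl|(1-(1-\gamma)\beta) - \gamma c\bigr|$.

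The main obstacle, and the essential novelty over Theorem 1, is handling the $\beta\mathbb{P}^+$ component that $\mathbb{P}_m$ injects on $\mathcal{P}^+$: this forces the integrand there to be a signed combination of two terms with overlapping support, so one must verify that the absolute value can be stripped with a known sign. The key inequality is
\begin{align*}
1 - (1-\gamma)\beta - \gamma c \;\geq\; 1 - (1-\gamma)\beta - \gamma = (1-\gamma)(1-\beta) \;\geq\; 0,
\end{align*}
which uses $c\leq 1$, $\beta<1$ and $\gamma\leq 1$ simultaneously. Dropping the absolute value and combining both integrals, the overall lower bound simplifies to $\tfrac{a}{4}\bigl[2 - 2\gamma c - 2(1-\gamma)\beta\bigr]$, which is minimized at $c=1$, yielding $L_G(D^*, G^{(h,w)}_\theta) \geq \tfrac{a(1-\gamma)(1-\beta)}{2}$.

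Finally, substituting $\mathbb{P}_g^{(h,w)} = \mathbb{P}^+$ (so $c=1$ and the integrand $(1-\gamma)(1-\beta)\mathbb{P}^+$ is nonnegative on $\mathcal{P}^+$) saturates both the triangle inequality and the bound $c\leq 1$, producing exactly $\tfrac{a(1-\gamma)(1-\beta)}{2}$. Hence $G^{*(h,w)}$ is a minimizer of $L_G$ against $D^*$, and together they form a Nash equilibrium with $\mathbb{P}_g^{(h,w)} = \mathbb{P}^+$. Conceptually, although the patch discriminator is asked to assign both low and high energy on $\mathbb{P}^+$, the unit weight on $\mathbb{E}_{\mathbb{P}^+}D$ dominates the smaller weight $(1-\gamma)\beta<1$ that $\mathbb{P}_m$ places on the same support, so the pure-normal signal prevails and the alignment conclusion is preserved.
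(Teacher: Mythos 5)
Your proposal is correct and follows essentially the same route as the paper's proof: reduce the optimal-discriminator problem to a total-variation distance via Lemma~1, split $\mathcal{P}$ into $\mathrm{Supp}(\mathbb{P}^+)$ and its complement using disjointness, lower-bound $L_G(D^*,G)$, and show the bound is attained exactly when $\mathbb{P}_g^{(h,w)}=\mathbb{P}^+$. The only difference is bookkeeping: you keep $\mathbb{P}'_m=\gamma\mathbb{P}_g^{(h,w)}+(1-\gamma)\mathbb{P}_m$ as the comparison distribution and strip the absolute value on $\mathcal{P}^+$ via the explicit sign check $1-(1-\gamma)\beta-\gamma c\ge(1-\gamma)(1-\beta)\ge 0$, whereas the paper first absorbs the $\beta\mathbb{P}^+$ component of $\mathbb{P}_m$ into the positive term and renormalizes with $\eta=\gamma/(1-\beta+\beta\gamma)$ so that the computation reduces verbatim to Theorem~1; the two lower bounds coincide, since $\tfrac{\gamma a(1-\eta)}{2\eta}=\tfrac{a(1-\gamma)(1-\beta)}{2}$.
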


\begin{proof}
    \label{proof:theorem2}
    \renewcommand{\qedsymbol}{}
    For simplicity, in the following proof, $G^{(h, w)}_{\theta}$, $D^{(h, w)}$, $\mathbb{P}^{(h,w)}_{g}(p)$ are represented as $G_{\theta}$, $D$, $\mathbb{P}_g(p)$, respectively.
    First, we prove that there exists an optimal discriminator. $D: \mathcal{P}\rightarrow [0, +\infty)$ is a measurable function composed by a neural network, mapping the patch feature $p$ into a real value. Let $D'(p) = min(D(p), a)$, then, we has $L_D(D', G_{\theta}) \leq L_D(D, G_\theta)$. Therefore, a function $D^*: \mathcal{P}\rightarrow[0, +\infty)$ is optimal if and only if $(D^*)'$ is. 
    Furthermore, it is optimal if and only if $L_D(D^*, G_{\theta}) \leq L_D(D, G_\theta)$ for all $D: \mathcal{P} \rightarrow [0, a]$. 
    So we consider if there is an optimal discriminator for the problem $\min_{0 \leq D(p) \leq a} {L_{D}(D, G_{\theta})}$.

    If $0 \leq D(p) \leq a$, we have
    \begin{align}
        & L_D(D, G_{\theta}) \nonumber \\
    = & \mathbb{E}_{ p \sim \mathbb{P}^+(p)}D(p) + \gamma \mathbb{E}_{p \sim \mathbb{P}_g(p)}[0, a - D(p)]^+ \nonumber \\
    & + (1-\gamma)\mathbb{E}_{p \sim \mathbb{P}_m(p)}[0, a - D(p)]^+ \nonumber \\
    = & \mathbb{E}_{ p \sim \mathbb{P}^+(p)}D(p) + \gamma \mathbb{E}_{p \sim \mathbb{P}_g(p)}[a - D(p)] \nonumber \\
    & + (1-\gamma)\mathbb{E}_{p \sim \mathbb{P}_m(p)}[a - D(p)] \nonumber \\
    = & a + \mathbb{E}_{ p \sim \mathbb{P}^+(p)}D(p) - \gamma \mathbb{E}_{p \sim \mathbb{P}_g(p)}\!D(p) \nonumber \\
    & - (1-\gamma)\mathbb{E}_{p \sim \mathbb{P}_m(p)}D(p)\nonumber \\
    = & a + \mathbb{E}_{ p \sim \mathbb{P}^+(p)}D(p) - \gamma \mathbb{E}_{p \sim \mathbb{P}_g(p)}D(p)\nonumber \\
    & - (1-\gamma)\beta \mathbb{E}_{p \sim \mathbb{P}^+(p)}D(p) - (1-\gamma)(1-\beta)\mathbb{E}_{p \sim \mathbb{P}^-}D(p)\nonumber \\
    = & a \!+\! (1 \!-\! \beta \!+\! \beta\gamma)[\mathbb{E}_{p \sim \mathbb{P}^+}D(p) \!-\! \frac{\gamma}{(1 \!-\! \beta \!+\! \beta\gamma)} \mathbb{E}_{p \sim \mathbb{P}_g(p)}D(p) \nonumber \\
    & - \frac{(1-\gamma)(1-\beta)}{1 - \beta + \beta\gamma}\mathbb{E}_{p \sim \mathbb{P}^-(p)}D(p)]
    \end{align}
    Minimizing $L_D(D, G_{\theta})$ is equivalent to finding for the lower bound of $L_D(D, G(\theta)$.
    We define $\eta=\frac{\gamma}{1-\beta+\beta\gamma}, \mathbb{P}'_m = \eta \mathbb{P}_g(p) + (1-\eta)\mathbb{P}^-(p)$, thus the lower bound can be represented as 
    \begin{align}
         & \inf_{0\leq D(p) \leq a} L_D(D, G_{\theta}) \nonumber \\
         = & a + \frac{\gamma}{\eta} \inf_{0\leq D(p) \leq a}\{\mathbb{E}_{p \sim \mathbb{P}^+}D(p) - [\eta \mathbb{E}_{p \sim \mathbb{P}_g(p)}D(p) \nonumber \\
         & + (1-\eta) \mathbb{E}_{p \sim \mathbb{P}^-(p)}D(p)]\} \nonumber \\
         = &a + \frac{\gamma}{\eta} \inf_{0\leq D(p) \leq a}\{\mathbb{E}_{p \sim \mathbb{P}^+}D(p) -\mathbb{E}_{p \sim \mathbb{P}'_m}D(p)\} \nonumber \\
         = & a + \frac{\gamma a}{2\eta} \inf_{-1 \leq f(p) \leq 1}\{\mathbb{E}_{p\sim \mathbb{P}^+}f(p) - \mathbb{E}_{p\sim \mathbb{P}'_m}f(p)\}.
    \end{align}

    There is a function $f(p): \mathcal{P}\rightarrow[-1,1]$, and $D(p) = \frac{a}{2}f(p) + \frac{a}{2}$. 
    
    According to Lemma \ref{lemma:appendix_lemma1}, There exists an optimal $f^*(p): \mathcal{P}\rightarrow [-1,1]$ such that $\inf_{-1 \leq f(p) \leq 1} \mathbb{E}_{p\sim \mathbb{P}^+}[f^*(p)] - \mathbb{E}_{p\sim \mathbb{P}'_m}[f^*(p)] = - \|\mathbb{P}^+, \mathbb{P}'_m\|_{TV}$. And the optimal discriminator is $D^*(p) = \frac{a}{2}f^*(p) + \frac{a}{2}$.
    Thus, we have
    \begin{align}
        L_D(D^*, G_{\theta}) = & a - \frac{\gamma a}{2\eta} \|\mathbb{P}^+, \mathbb{P}'_m\|_{TV}
    \end{align}
    Minimizing the equation \eqref{eq:appendix_discrminator_theorem_patch} will get the total variation distance between the distribution $\mathbb{P}^+(p)$ and mixed distribution $\mathbb{P}'_m(p) = \eta \mathbb{P}_g(p) + (1-\eta)\mathbb{P}^-(p)$.
    Bring the optimal $D^*(p)$ into $L_G(D, G_\theta)$, we can get
    \begin{align}
        L_G(D^*, G_{\theta}) =& \gamma \mathbb{E}_{p\sim \mathbb{P}_g} D^*(p) + (1- \gamma) \mathbb{E}_{p \sim \mathbb{P}_m}D^*(p) \nonumber \\
        & - \mathbb{E}_{p \sim \mathbb{P}^+}D^*(p) \nonumber \\
        =& -L_{D}(D^*, G_{\theta}) + a \nonumber \\
        =& \frac{\gamma a}{2\eta}\|\mathbb{P}^+, \mathbb{P}'_m\|_{TV}
    \end{align}
     Assuming $\mathbb{P}^+$ and $\mathbb{P}^-$ are two disjoint distributions, we divide the entire patch feature space $\mathcal{P}$ into two subspaces: $\mathcal{P}^+ = Supp(\mathbb{P}^+)$, $\mathcal{P}^- = \mathcal{P} \setminus \mathcal{P}^+$. On the subspace $\mathcal{P}^+$, $\int_{\mathcal{P}^+}\mathbb{P}^+dp = 1, \mathbb{P}^-=0$. On the subspace $\mathcal{P}^-$, $\int_{\mathcal{P}^-} \mathbb{P}^- dp = 1, \mathbb{P}^+ = 0$.
    Thus we have
    \begin{align}
        & L_G(D^*, G_{\theta}) \nonumber \\
        =& \frac{\gamma a}{2\eta}\|\mathbb{P}^+, \mathbb{P}'_m\|_{TV} \nonumber \\
        = & \frac{\gamma a}{4\eta} \int_{\mathcal{P}} |\mathbb{P}^+ - \eta \mathbb{P}_g - (1-\eta)\mathbb{P}^-| dp \nonumber \\
        = & \frac{\gamma a}{4\eta} \int_{\mathcal{P}^+} |\mathbb{P}^+ - \eta \mathbb{P}_g - (1-\eta)\mathbb{P}^-| dp \nonumber \\
        & + \frac{\gamma a}{4\eta} \int_{\mathcal{P}^-} |\mathbb{P}^+ - \eta \mathbb{P}_g - (1-\eta)\mathbb{P}^-| dp \nonumber \\
        = & \frac{\gamma a}{4\eta} \left\{\int_{\mathcal{P}^+} |\mathbb{P}^+ - \eta \mathbb{P}_g| dp + \int_{\mathcal{P}^-} \eta \mathbb{P}_g + (1-\eta)\mathbb{P}^- dp\right\} \nonumber \\
    \end{align} 
    In subspace $\mathcal{P}^+$, we have that
    \begin{align}
        \int_{\mathcal{P}^+}\mathbb{P}^+ dp = 1 \geq \int_{\mathcal{P}^+}\mathbb{P}_g dp \geq \eta \int_{\mathcal{P}^+}\mathbb{P}_g dp,
    \end{align}
    thus we can get
    \begin{align}
        \int_{\mathcal{P}^+} (\mathbb{P}^+ - \mathbb{P}_g) dp \geq 0, \int_{\mathcal{P}^+} (\mathbb{P}^+ - \eta \mathbb{P}_g) dp \geq 0.
    \end{align}
    So, there is
    \begin{align}
        & L_G(D^*, G_{\theta}) \nonumber \\
        = & \frac{\gamma a}{4\eta} \left\{ \int_{\mathcal{P}^+} |\mathbb{P}^+ - \eta \mathbb{P}_g| dp + \int_{\mathcal{P}^-} \eta \mathbb{P}_g + (1-\eta)\mathbb{P}^- dp \right\}\nonumber \\
        \geq & \frac{\gamma a}{4\eta} \left\{\int_{\mathcal{P}^+} (\mathbb{P}^+ - \eta \mathbb{P}_g) dp + \int_{\mathcal{P}^-} \eta \mathbb{P}_g + (1-\eta)\mathbb{P}^- dp\right\} \nonumber \\
        = & \frac{\gamma a}{4\eta} \left\{\int_{\mathcal{P}^+} \eta (\mathbb{P}^+ - \mathbb{P}_g) dp + \int_{\mathcal{P}^+} (1-\eta)\mathbb{P}^+ dp \right\}\nonumber \\
        & + \frac{\gamma a}{4\eta}  \left\{\int_{\mathcal{P}^-} \eta \mathbb{P}_g + (1-\eta)\mathbb{P}^- dp\right\} \nonumber \\
        \geq & \frac{\gamma a}{4\eta} \left\{0 + (1-\eta) + (1-\eta)\right\} \nonumber \\
        = &\frac{\gamma a (1-\eta)}{2\eta}.
    \end{align}
    
    Bring $G^*_{\theta}$ such that $\mathbb{P}_g = \mathbb{P}^+$ into $L_G(D^*, G_{\theta})$, we can get
    \begin{align}
        L_G(D^*, G^*_{\theta}) = & \frac{\gamma a}{4\eta} \int_{\mathcal{P}} |\mathbb{P}^+ - \eta \mathbb{P}^+ - (1-\eta)\mathbb{P}^-| dp \nonumber \\
        = & \frac{\gamma a}{4\eta} \int_{\mathcal{P}} (1-\eta)[\mathbb{P}^+ + \mathbb{P}^-] dp \nonumber \\
        = & \frac{\gamma a (1-\eta)}{2\eta}.
    \end{align}
    So we conclude there exists a Nash equilibrium system $(D^*, G^*)$ such that $\mathbb{P}_g(p) = \mathbb{P^+}(p)$. That is, there exists a Nash equilibrium system such that $\mathbb{P}^{(h, w)}(p) = \mathbb{P}^+(p)$.
    \label{proof:theorem}
    \end{proof}

\section{Datasets}
\label{appendix:datasets}
\subsection{Medical Datasets}
\begin{itemize}
    \item \textbf{ISIC2018}~\cite{tschandl2018ham10000,codella2019skin}: The ISIC2018 challenge dataset (task 3) contains 7 categories and we classify the NV (nevus) category as normal samples, and the rest 6 categories as abnormal data. The training set includes 6705 healthy images and 3310 unhealthy images from 6 categories. As the setting of AE-FLOW~\cite{zhao2023aeflow}, the original validation set is used as the test set which includes 123 normal images and 70 abnormal images.
    \item \textbf{Chest X-rays}~\cite{kermany2018identifying}: The chest X-ray dataset contains 1349 normal images and 3883 lesion images in the training set. There are 234 normal images and 390 anomalous images which are diagnosed as pneumonia in the test set.
    \item \textbf{Br35H}\cite{br35h,zhou2024anomalyclip}: Brain Tumor Detection dataset contains 1500 non-tumorous images and  1500 tumorous images. The training set is split with 1200 normal and 1200 anomalous images while the rest 300 normal and anomalous images are used in the test set.
    \item \textbf{OCT}~\cite{kermany2018identifying}: Retinal optical coherence tomography (OCT) contains 83484 images in the training set with 26,315 normal images and 57,169 anomalous images. The images are separated into 4 categories: normal, drusen, CNV, and DME. The images of category normal are considered as normal while the other three types are anomalous. In the test set, each category has 242 images.
\end{itemize}

\subsection{Industrial Datasets}
\begin{itemize}
    \item \textbf{MVTec AD}~\cite{bergmann2019mvtec} is an industrial dataset, containing 15 classes with 5 textures and 10 objects with a total of 3629 normal images in the training set and 1725 normal and anomalous images in the test set. The widely used detection dataset has pixel-level annotations for anomalous images in the test set for evaluating the performance of anomaly localization.
    \item \textbf{Visa}~\cite{zou2022visa} is an industrial dataset comprised of 12 classes exhibiting a wide range of scales and anomaly types. It consists of 10,821 high-resolution images, categorized into 9,621 normal instances and 1,200 anomalous instances.
\end{itemize}

\begin{figure}[htbp]
    \centering
    \centering 
    \vskip -0.5cm
\subfigure[normal images]{%
    \includegraphics[width=0.45\linewidth]{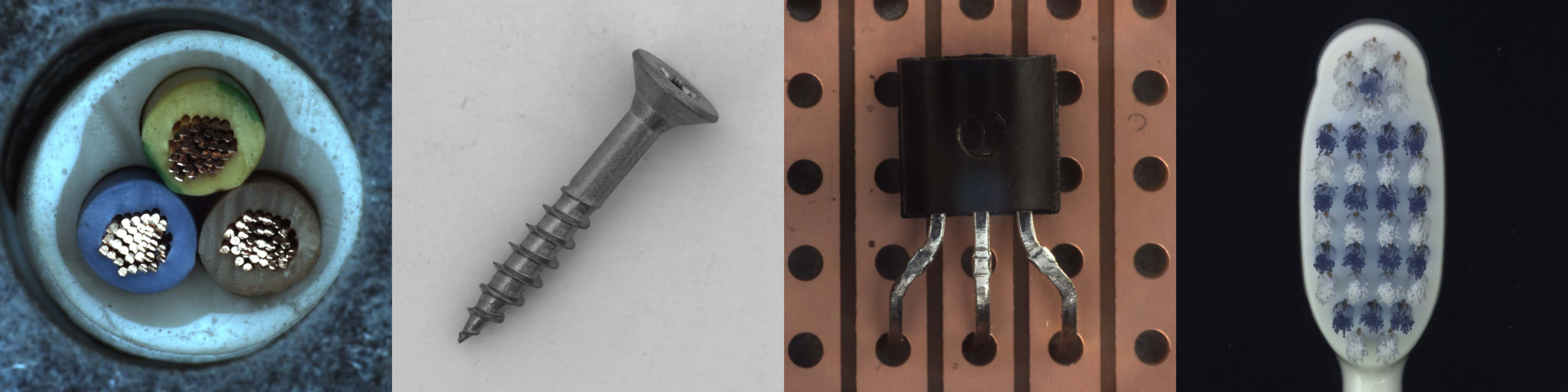}%
}
\hspace{0.3cm}
\subfigure[elastic transformation]{%
    \includegraphics[width=0.45\linewidth]{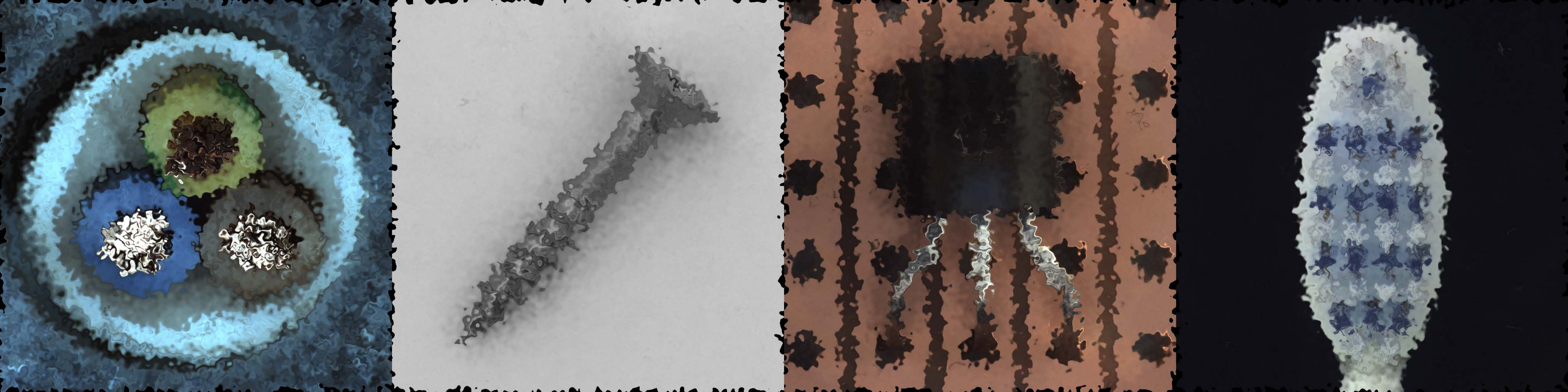}%
}
\caption{Visualization of elastic transformation of images.}
\label{fig:elastic}
\end{figure}

As there are no training anomalous images in the industrial datasets, so as Figure \ref{fig:elastic} shows, we use elastic transformation, which is implemented in PyTorch, to distort the original image content and produce the corresponding anomalous samples, then utilize such anomalies for adversarial training to evaluate the performance of the proposed weakly-supervised method. 

\section{Complete Implementation Details}
\label{appendix:implementation_details}
For the auto-encoder, to encode multiple features into a low-dimensional representation, we first utilize a sequence of $3\times 3$ convolution operators followed by instance normalization and nonlinear activation ReLU to downsample the large feature maps to the same size as the smallest feature map. Then, we concatenate all the downsampled feature maps along channels and feed the concatenated features into residual convolutional blocks used in the backbone ResNet to yield the latent representation $z$. In the decoding process, we employ the residual deconvolution which is composed of $2 \times 2$ transposed convolutions to upsample it stage by stage, leading to a list of reconstructed feature maps. The patch-level energy-based discriminator is implemented by multi-layer perception, which is implemented by $1\times 1$ convolution with stride 1 and nonlinear activation LeakyReLU. 

All medical images and industrial images are resized to 256, $\beta$ and $\gamma$ are set to 0.5, $\lambda$ is set to 0.02. Adam optimizer is utilized with $\beta=(0.5, 0.999)$. The learning rate for the medical dataset is set to 1e-03 and for the industrial dataset is set to 5e-03. The learning rate for the energy-based discriminator is set to 1e-04. For medical datasets, as the previous weakly-supervised method DevNet~\cite{pang2021devnet}, DRA~\cite{ding2022dra} done, Resnet18 is chosen as the pre-trained encoder to extract two-level features from layers 2 and 3. For the industrial dataset, as method RD4AD\cite{deng2022RD}, PatchCore\cite{roth2022patchcore}, WideResnet50 is chosen as the pre-trained encoder to extract features from layers 1, 2, and 3. For the image-level anomaly score, the average of the top k value is the top 100 score in the anomaly score maps. As in the ISIC2018 dataset, nevus is classified as normal samples, similar to other abnormal images, k is chosen as the whole pixel num. The batch size for medical datasets is set to 64 and for industrial datasets is set to 16. The training epoch for the medical dataset is set to 200 and set to 400 for industrial datasets. Our codes are implemented with Python 3.7 and PyTorch 1.13.0 cuda 11.7. Experiments are run on single NVIDIA GeForce RTX3090 GPUs (memory used is less than 24GB).

\section{Additional Experiments Results}

\begin{table*}[htbp]
\small
\centering
{\begin{tabular}{c|ccc|ccc}
\toprule
\multirow{2}{*}{$r_l$} & \multicolumn{3}{c|}{ISIC2018} & \multicolumn{3}{c}{Chest X-ray} \\
\cmidrule(r){2-4}  \cmidrule(r){5-7}
& AUC & F1 & ACC &  AUC & F1 & ACC \\
\cmidrule(r){1-1} \cmidrule(r){2-4}  \cmidrule(r){5-7}
0\% & 87.51 $\pm$ 0.07 & 79.50 $\pm$ 0.68 & 82.90 $\pm$ 0.85 & 85.59 $\pm$ 0.16 & 83.70 $\pm$ 0.30 & 78.04 $\pm$ 1.19 \\
\cmidrule(r){1-1} \cmidrule(r){2-4}  \cmidrule(r){5-7}
1\% & 91.02 $\pm$ 0.10 & 82.04 $\pm$ 0.41 & 85.84 $\pm$ 0.32 & 93.22$\pm$ 0.07 & 88.77 $\pm$ 0.11 & 86.22 $\pm$ 0.34 \\
\cmidrule(r){1-1} \cmidrule(r){2-4}  \cmidrule(r){5-7}
2\% & 91.68 $\pm$ 0.21 & 82.86 $\pm$ 0.24 & 86.87 $\pm$ 0.92 & 95.23 $\pm$ 0.06 & 91.17 $\pm$ 0.35 & 88.94 $\pm$ 0.13 \\
\cmidrule(r){1-1} \cmidrule(r){2-4}  \cmidrule(r){5-7}
5\% & 92.66 $\pm$ 0.10 & 84.00 $\pm$ 0.30 & 87.65 $\pm$ 0.31 & 96.57 $\pm$ 0.03 & 93.87 $\pm$ 0.48 & 92.47 $\pm$ 0.53\\
\midrule
\midrule
\multirow{2}{*}{$r_l$} & \multicolumn{3}{c|}{Br35H} & \multicolumn{3}{c}{OCT} \\
\cmidrule(r){2-4}  \cmidrule(r){5-7}
& AUC & F1 & ACC &  AUC & F1 & ACC \\
\cmidrule(r){1-1} \cmidrule(r){2-4}  \cmidrule(r){5-7}
0\% & 97.37 $\pm$ 0.04 & 96.23 $\pm$ 0.08 & 96.16 $\pm$ 0.08 &  99.33 $\pm$ 0.05 &  97.84 $\pm$ 0.07 & 96.80 $\pm$ 0.09\\
\cmidrule(r){1-1} \cmidrule(r){2-4}  \cmidrule(r){5-7}
1\% & 98.89 $\pm$ 0.07 & 97.88 $\pm$ 0.08 & 97.83 $\pm$ 0.08 & 99.88 $\pm$ 0.07 & 99.33 $\pm$ 0.24 & 99.00 $\pm$ 0.48 \\
\cmidrule(r){1-1} \cmidrule(r){2-4}  \cmidrule(r){5-7}
2\% & 99.07 $\pm$ 0.02 & 98.03 $\pm$ 0.15 & 98.00 $\pm$ 0.13 & 99.91 $\pm$ 0.07 & 99.42 $\pm$ 0.28 & 99.13 $\pm$ 0.42 \\
\cmidrule(r){1-1} \cmidrule(r){2-4}  \cmidrule(r){5-7}
5\% & 99.79 $\pm$ 0.08 & 98.20 $\pm$ 0.08 & 98.17 $\pm$ 0.08 & 99.93 $\pm$ 0.04 & 99.55 $\pm$ 0.18 & 99.33 $\pm$ 0.26 \\
\bottomrule
\end{tabular}}
\caption{Performance of Average AUC(\%), best F1(\%)) and ACC(\%) with standard deviation on medical datasets. With $k=1$, the ratio of labeled anomalies $r_l$ in the training is increased from $0\%$ to $5\%$.}
\label{appendix_tab:medical}
\end{table*}

\begin{table*}[htbp]
    \centering
    \setlength{\tabcolsep}{1mm}
    {\begin{tabular}{c|c|ccccccccccccc}
    \toprule
        & & airplane & automobile & bird & cat & deer & dog & frog & horse & ship & truck & mean \\
        \midrule
        \multirow{2}{*}{Unsupervised} & RD & 88.42 & 91.80 & 76.16 & 72.82 & 85.10 & 74.89 & 89.56 & 87.76 & 90.54 & 90.94 & 85.79 \\
        & Recon(base) & 89.80 & 94.48 & 80.70 & 73.96 & 85.04 & 88.08 & 92.27 & 92.64 & 93.4 & 92.87 & 88.32 \\
        \midrule
        \multirow{5}{*}{\shortstack{Weakly \\{Supervised}}} & Deep SAD & 84.98 & 89.62 & 74.86 & 76.63 & 80.65 & 80.28 & 88.66 & 79.61 & 90.01 & 86.49 & 83.18 \\
         & AA-BiGAN & 83.12 & 90.81 & 80.51 & 73.92 & 75.70 & 86.46 & 91.15 & 81.85 & 87.70 & 83.06 & 83.42 \\
         & DRA & 86.93 & 92.31 & 82.17 & 78.80 & 85.41 & 89.48 & 88.60 & 88.59 & 89.94 & 90.94 & 87.31 \\
         & DevNet & 89.27 & 94.52 & 84.54 & \textbf{84.16} & \textbf{90.35} & 89.30 & 94.65 & 85.83 & 91.59 & 93.33 & 89.75 \\
         & Ours & \textbf{93.21} & \textbf{96.71} & \textbf{87.37} & 81.76 & 90.00 & \textbf{91.08} & \textbf{95.21} & \textbf{95.14} & \textbf{95.95} & \textbf{95.02} & \textbf{92.14} \\
    \bottomrule
    \end{tabular}}
    \caption{One class detection performance of average AUC(\%) on CIFAR10. Weakly supervised methods are run under the setting $r_l=5\%, k=3$.}
    \label{tab:appendix_cifar10}
\end{table*}

\begin{table*}[htbp]
    \centering
    {
    \begin{tabular}{cccccccccccccccccc}
    \toprule
        \multirow{2}{*}{$r_l$} & \multirow{2}{*}{Normalization} & \multicolumn{3}{c}{ISIC2018} & \multicolumn{3}{c}{Chest X-Ray} & \multicolumn{3}{c}{Br35H} \\
        \cmidrule(lr){3-5} \cmidrule(lr){6-8}  \cmidrule(lr){9-11}  \cmidrule(lr){12-14} 
        & & AUC & F1 & ACC & AUC & F1 & ACC & AUC & F1 & ACC \\
       \cmidrule(r){1-1} \cmidrule(r){2-2} \cmidrule(lr){3-5} \cmidrule(lr){6-8}  \cmidrule(lr){9-11}  \cmidrule(lr){12-14} 
       \multirow{2}{*}{0\%} & Batch & \textbf{87.56} & 79.27 & 82.38 & 85.23 & 83.21 & 77.24 & \textbf{97.94} & \textbf{96.58} & \textbf{96.50}\\
       & Instance & 87.51 & \textbf{79.50} & \textbf{82.90} & \textbf{85.59} & \textbf{83.70} & \textbf{78.04} & 97.37 & 96.23 & 96.16 \\
       \midrule
       \multirow{2}{*}{5\%} & Batch & 90.46 & 82.58 & 86.53 & 94.21 & 89.60 & 87.50 & 99.20 & 97.24 & 97.17 \\
       & Instance & \textbf{92.66} & \textbf{84.00} & \textbf{87.65} & \textbf{96.57} & \textbf{93.87} & \textbf{92.47} & \textbf{99.79} & \textbf{98.20} & \textbf{98.17} \\
       \bottomrule
    \end{tabular}}
     \caption{Performance of different normalizing ways when using anomalies.
    }
    \label{tab:normalize}
\end{table*}

\begin{figure*}[htbp]
  \centering 
    \includegraphics[width=0.9\textwidth]{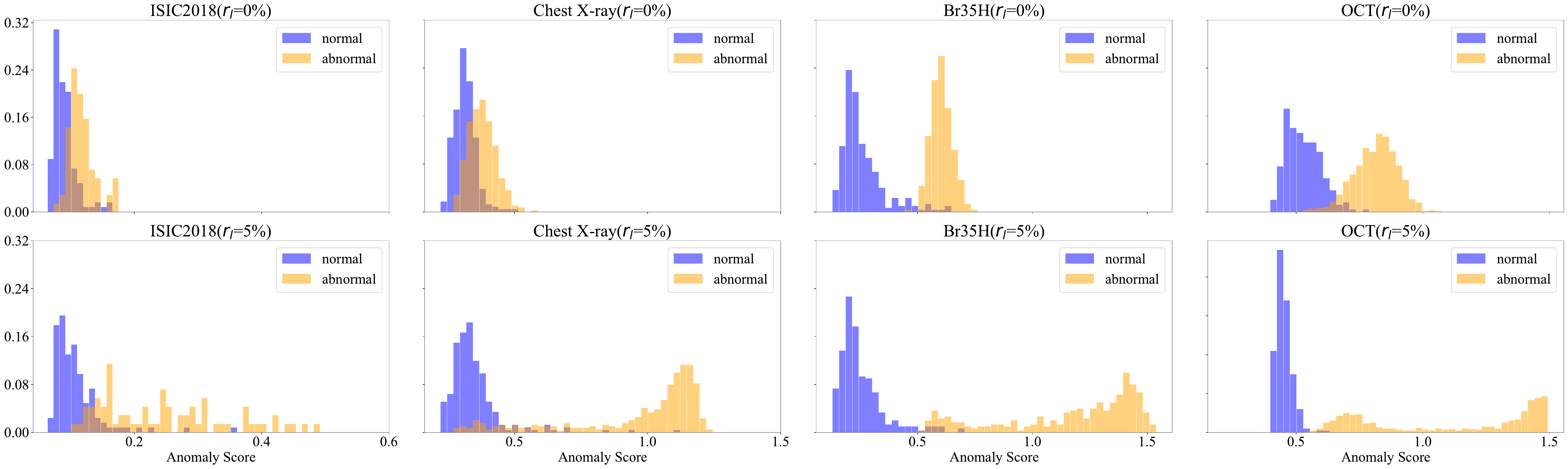}
    \captionof{figure}{The two rows separately exhibit anomaly score distribution on the test set trained without/with $5\%$ anomalies.} 
    \label{fig:anomaly_score} 
\end{figure*}

\label{appendix:experiment}
\subsection{Detailed Experimental Results}
\label{appendix:detailed_experiment}
In addition to the averaged performances shown in the main content, we report the average performance with the standard deviation of three runs with different random seeds in Table \ref{appendix_tab:medical}, \ref{tab:appendix_mvtec} and \ref{tab:appendix_visa}.

\subsection{Anomaly Scores Distributions of Reconstruction and Proposed Methods}

Figure~\ref{fig:anomaly_score} illustrates the distributions of anomaly scores on the test set of the reconstruction-based model and our model trained with 5\% anomalous samples in the training set. The results clearly show that the proposed method can increase the disparity of anomaly scores between normal and abnormal samples, proving that our knowledge-aware adversarial training strategy can effectively regularize the reconstruction of anomalous samples and give larger anomaly scores to anomalous samples. 

\subsection{Different Normalization Way}
During the training stage, the collected anomalous samples will also pass through the auto-encoder. If using usual batch normalization(BN),  the inclusion of anomalies may cause a considerable shift to the mean and variance of BN compared to only using normal samples, negatively affecting the reconstruction quality of normal samples. Thus, instance normalization is uniformly employed in our network, where the mean and variance are calculated for every sample.

To evaluate the influence of normalization way when introducing anomalies, we mainly conduct the ablation study on three medical datasets with the setting $k=1$, $r_l=0\%$, and $r_l=5\%$. As shown in Table \ref{tab:normalize}, there is a small difference between batch and instance normalization when only using normal samples during training. But when introducing anomalies in our methods, although using batch normalization can also improve the detection performance, the improvement of batch normalization is much smaller than instance normalization, showing that it can negatively influence our energy-based strategy to utilize anomalies, so instance normalization is uniformly employed in our model.

\subsection{Qualitative Visualization Results}
We show the visualization results of using $5\%$ of anomalies in the training set. As shown in Figure \ref{fig:appendix_medical} and \ref{fig:appendix_mvtec}, the anomalous parts in the medical and industrial datasets are assigned higher anomaly scores. The qualitative results in these figures show that our methods can be used to localize anomalous areas using incomplete anomalous information.
 
\subsection{Additional Experimental Results}
We mainly evaluate the performance of our method on real-world anomaly detection datasets including medical and industrial datasets in previous experiments. Many anomaly detection methods conduct experiments on the commonly used dataset CIFAR10 for one-class classification. We also conduct experiments on CIFAR10. For each category in CIFAR10, we regard the category as normal while other categories as anomalous classes. Following the data setting on the medical dataset, for each class in CIFAR10, we conduct 5 random experiments by using 5\% anomalies from randomly selected 3 anomalous classes in the training set. Totally, we conduct 50 experiments in the weakly-supervised scene. The results of performance are shown in Table \ref{tab:appendix_cifar10}. Compared to the reconstruction base, our method improves the average detection AUC on CIFAR10 from 88.32\% to 92.14\%, proving that our method is effective in such semantic anomaly detection datasets. Compared to other weakly-supervised methods, our method also gets the overall best detection performance for the one-class anomaly detection in CIFAR10.

\begin{table*}[ht]
\centering
\resizebox{0.98\textwidth}{!}{\begin{tabular}{c|cccccccc}
\toprule
Method & DRAEM	& RD4AD	& PatchCore	& RD++ & DMAD & D3AD & CKAAD (ours) \\
\midrule
carpet      &(97.0, 95.5, 92.9) & (98.9, 98.9, 97.0) & (98.7, 99.0, 96.6) & (100, 99.2, 97.7) & (100, 99.1, 86.1) & (94.2, 97.6, 95.1) & (100$\pm$0.00, 99.3$\pm$0.01, 97.9$\pm$0.02) \\	
grid        &(99.9, 99.7, 98.4) & (100, 99.3, 97.6) & (98.2, 98.7, 96.0) & (100, 99.3, 97.7) & (100, 99.2, 72.4) & (100, 99.2, 96.9) & (100$\pm$0.00, 99.3$\pm$0.02, 97.6$\pm$0.03) \\	
leather	    &(100, 98.6, 98.0) & (100, 99.4, 99.1) & (100, 99.3, 98.9) & (100, 99.4, 99.2) & (100, 99.5, 97.7) & (98.5, 99.4, 98.1) & (100$\pm$0.00, 99.5$\pm$0.01, 99.2$\pm$0.01) \\	
tile        &(99.6, 99.2, 98.9) & (99.3, 95.6, 90.6) & (98.7, 95.4, 87.3) & (99.7, 96.6, 92.4) & (100, 96.0, 82.7) & (95.5, 94.7, 93.6) & (99.9$\pm$0.06, 95.8$\pm$0.02, 91.6$\pm$0.13) \\	
wood	    &(99.1, 96.4, 94.6) & (99.2, 95.3, 90.9) & (99.2, 95.0, 89.4) & (99.3, 95.8, 93.3) & (100, 95.5, 86.3) & (99.7, 95.9, 91.0) & (99.6$\pm$0.07, 95.3$\pm$0.04, 92.2$\pm$0.16) \\	
bottle	    &(99.2, 99.1, 97.2) & (100, 98.7, 96.6) & (100, 98.6, 96.2) & (100, 98.8, 97.0) & (100, 98.9, 96.0) & (100, 98.6, 96.0) & (100$\pm$0.00, 99.0$\pm$0.02, 97.3$\pm$0.05) \\	
cable	    &(91.8, 94.7, 76.0) & (95.0, 97.4, 91.0) & (99.5, 98.4, 92.5) & (99.2, 98.4, 93.9) & (99.1, 98.1, 95.1) & (97.8, 93.3, 87.3) & (99.4$\pm$0.11, 98.3$\pm$0.15, 93.9$\pm$0.34) \\	
capsule	    &(98.5, 94.3, 91.7) & (96.3, 98.7, 95.8) & (98.1, 98.8, 95.5) & (99.0, 98.8, 96.4) & (98.9, 98.3, 89.7) & (96.6, 97.9, 90.7) & (97.5$\pm$0.02, 98.8$\pm$0.04, 96.3$\pm$0.04) \\	
hazelnut	&(100, 99.7, 98.1) & (99.9, 98.9, 95.5) & (100, 98.7, 93.8) & (100, 99.2, 96.3) & (100, 99.1, 96.4) & (98.0, 98.8, 91.8) & (100$\pm$0.00, 99.2$\pm$0.04, 96.2$\pm$0.04) \\	
metal\_nut	&(98.7, 99.5, 94.1) & (100, 97.3, 92.3) & (100, 98.4, 91.4) & (100, 98.1, 93.0) & (100, 97.7, 94.6) & (98.9, 96.1, 89.7) & (100$\pm$0.00, 98.1$\pm$0.04, 93.4$\pm$0.19) \\	
pill	    &(98.9, 97.6, 88.9) & (96.6, 98.2, 96.4) & (96.6, 97.4, 93.2) & (98.4, 98.3, 97.0) & (97.3, 98.7, 95.0) & (99.2, 98.2, 96.2) & (98.1$\pm$0.24, 98.3$\pm$0.12, 96.8$\pm$0.04) \\	
screw	    &(93.9, 97.6, 98.2) & (97.0, 99.6, 98.2) & (98.1, 99.4, 97.9) & (98.9, 99.7, 98.6) & (100, 99.6, 94.3) & (83.9, 99.0, 95.5) & (98.9$\pm$0.03, 99.6$\pm$0.04, 98.4$\pm$0.05) \\	
toothbrush	&(100, 98.1, 90.3) & (99.5, 99.1, 94.5) & (100, 98.7, 91.5) & (100, 99.1, 94.2) & (100, 99.4, 92.5) & (100, 99.0, 94.6) & (100$\pm$0.00, 99.2$\pm$0.13, 94.8$\pm$0.34) \\	
transistor	&(93.1, 90.9, 81.6) & (96.7, 92.5, 78.0) & (100, 96.3, 83.7) & (98.5, 94.3, 81.8) & (98.7, 95.4, 86.9) & (96.8, 95.6, 86.9) & (100$\pm$0.00, 97.3$\pm$0.08, 85.6$\pm$0.37) \\	
zipper	    &(100, 98.8, 96.3) & (98.5, 98.2, 95.4) & (99.4, 98.8, 97.1) & (98.6, 98.8, 96.3) & (99.6, 98.3, 92.9) & (98.2, 98.3, 95.3) & (99.3$\pm$0.07, 98.8$\pm$0.05, 96.6$\pm$0.26) \\	
\midrule
average	    &(98.0, 97.3, 93.0) & (98.5, 97.8, 93.9) & (99.1, 98.1, 93.4) & (99.4, 98.3, 95.0) & (\textbf{99.5}, 98.2, 90.6) & (97.2, 97.4, 93.3) & (\textbf{99.5}$\pm$0.04, \textbf{98.4}$\pm$0.05, \textbf{95.2}$\pm$0.14) \\
\bottomrule
\end{tabular}}
\caption{Anomaly detection results in terms of (image AUC, pixel AUC, PRO) of 15 classes on the MVTec dataset. The performance of our proposed method CKAAD is reported over 3 experiments with standard deviation.}
\label{tab:appendix_mvtec}
\end{table*}

\begin{table*}[ht]
\centering
\resizebox{0.98\textwidth}{!}{\begin{tabular}{c|cccccccc}
\toprule
Method & DRAEM	& RD4AD	& PatchCore	& RD++ & DMAD & D3AD & CKAAD (ours) \\
\midrule
candle          &(94.4, 97.3, 93.7) & (92.2, 97.9, 92.2) & (98.6, 99.5, 94.0) & (96.4, 98.6, 93.8) & (92.7, 98.1, 90.6) & (95.6, -, 92.7) & (96.6±0.08, 98.8±0.01, 94.4±0.01) \\	
capsules        &(76.3, 99.1, 84.5) & (90.1, 89.5, 56.9) & (81.6, 99.5, 85.5) & (92.1, 99.4, 95.8) & (88.0, 99.2, 88.4) & (88.5, -, 95.7) & (90.1±0.21, 99.4±0.02, 95.6±0.02) \\	
cashew	        &(90.7, 88.2, 51.8) & (99.6, 95.8, 79.0) & (97.3, 98.9, 94.5) & (97.8, 95.8, 91.2) & (95.0, 95.3, 88.8) & (94.2, -, 89.4) & (98.2±0.05, 97.3±0.11, 93.5±0.11) \\	
chewing\_gum    &(94.2, 97.1, 60.4) & (99.7, 99.0, 92.5) & (99.1, 99.1, 84.6) & (96.4, 99.4, 88.1) & (97.4, 97.9, 73.9) & (99.7, -, 94.1) & (99.2±0.13, 98.3±0.07, 87.6±0.07) \\	
fryum	        &(97.4, 92.7, 93.1) & (96.6, 94.3, 81.0) & (96.2, 93.8, 85.3) & (95.8, 96.5, 90.0) & (98.0, 97.0, 92.2) & (96.5, -, 91.7) & (97.1±0.12, 97.1±0.07, 92.2±0.07) \\	
macaroni1	    &(95.0, 99.7, 96.7) & (98.4, 97.7, 71.3) & (97.5, 99.8, 95.4) & (94.0, 99.7, 96.9) & (94.3, 99.7, 97.1) & (94.3, -, 99.3) & (98.1±0.12, 99.5±0.03, 95.2±0.03) \\	
macaroni2	    &(96.2, 99.9, 99.6) & (97.6, 87.7, 68.0) & (78.1, 99.1, 94.4) & (88.0, 99.7, 97.7) & (90.4, 99.7, 98.5) & (92.5, -, 98.3) & (89.8±0.09, 99.3±0.00, 97.6±0.00) \\	
pcb1	        &(54.8, 90.5, 24.8) & (97.6, 75.0, 43.2) & (98.5, 99.9, 94.3) & (97.0, 99.7, 95.8) & (95.8, 99.8, 96.2) & (97.7, -, 96.4) & (97.5±0.10, 99.8±0.00, 96.4±0.00) \\	
pcb2	        &(77.8, 90.5, 49.4) & (91.1, 64.8, 46.4) & (97.3, 99.0, 89.2) & (97.2, 99.0, 90.6) & (96.9, 99.0, 89.3) & (98.3, -, 94.0) & (97.4±0.02, 98.8±0.05, 92.1±0.05) \\	
pcb3	        &(94.5, 98.6, 89.7) & (95.5, 95.5, 80.3) & (97.9, 99.2, 90.9) & (96.8, 99.2, 93.1) & (98.3, 99.3, 93.6) & (97.4, -, 94.2) & (97.0±0.04, 99.2±0.00, 95.2±0.00) \\	
pcb4	        &(93.4, 88.0, 64.3) & (96.5, 92.8, 72.2) & (99.6, 98.6, 90.1) & (99.8, 98.6, 91.9) & (99.7, 98.8, 91.4) & (99.8, -, 86.4) & (99.9±0.00, 98.5±0.03, 91.2±0.03) \\	
pipe\_fryum	    &(99.4, 90.9, 75.9) & (97.0, 92.0, 68.3) & (99.8, 99.1, 95.7) & (99.6, 99.1, 95.6) & (99.0, 99.3, 95.3) & (96.9, -, 97.2) & (99.9±0.05, 99.2±0.02, 96.9±0.02) \\	
\midrule
average	        &(88.7, 94.4, 73.7) & (96.0, 90.1, 70.9) & (95.1, \textbf{98.8}, 91.2) & (95.9, 98.7, 93.4) & (95.5, 98.6, 91.3) & (96.0, 97.9, \textbf{94.1}) & (\textbf{96.7}±0.08, \textbf{98.8}±0.03, 94.0±0.03)	\\
\bottomrule
\end{tabular}}
\caption{Anomaly detection results in terms of (image AUC, pixel AUC, PRO) of 12 classes on the Visa dataset. The performance of our proposed method CKAAD is reported over 3 experiments with standard deviation.}
\label{tab:appendix_visa}
\end{table*}

\begin{figure}[htbp]
    \centering 
\subfigure[Normal Chest X-ray]{%
    \includegraphics[width=0.48\linewidth]{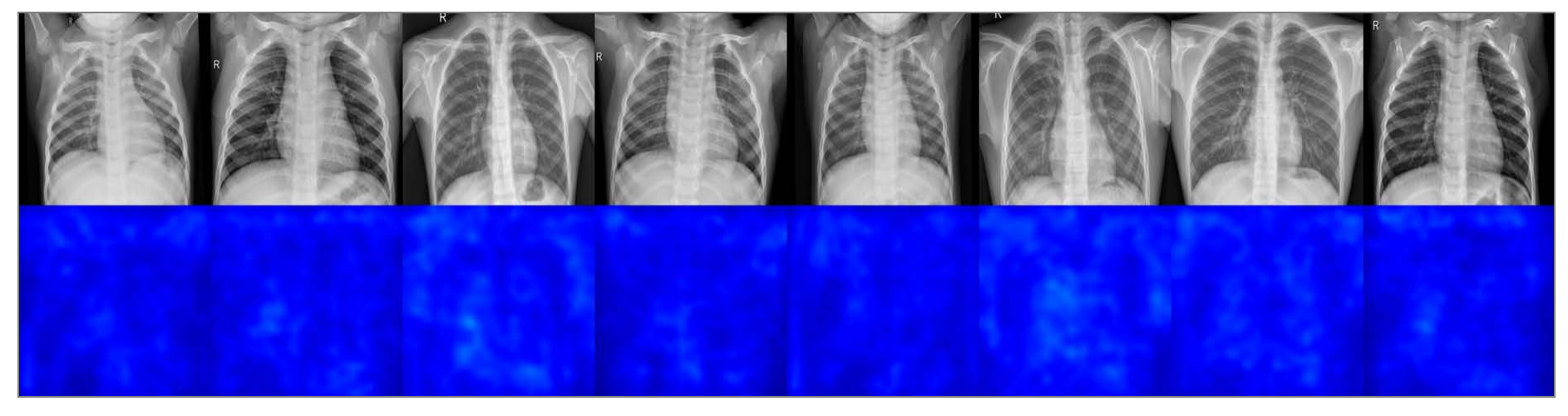}%
}
\subfigure[Abnormal Chest X-ray]{%
    \includegraphics[width=0.48\linewidth]{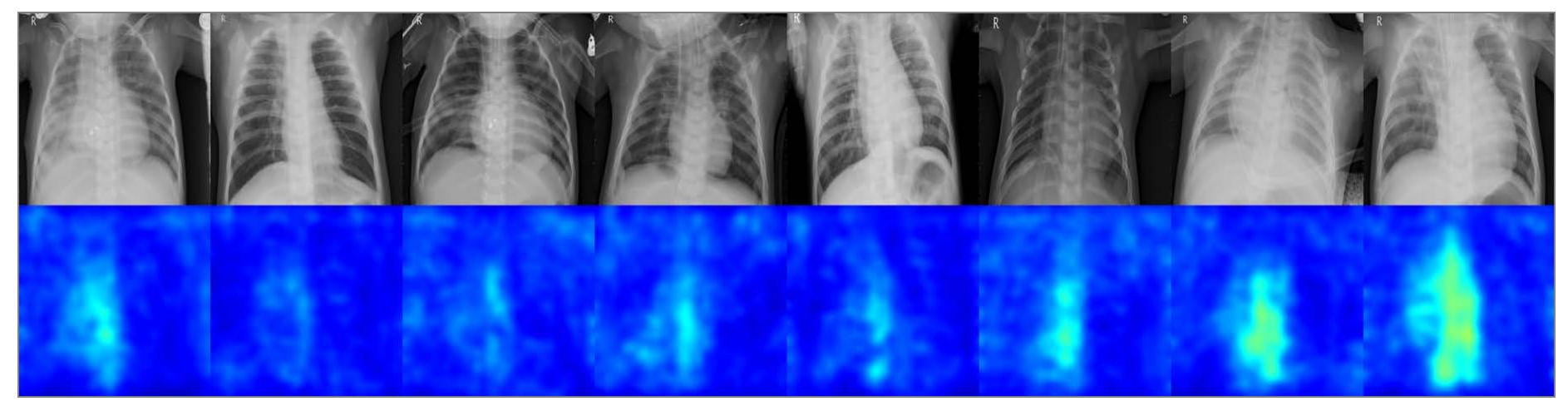}%
}
\subfigure[Normal Br35H]{%
    \includegraphics[width=0.48\linewidth]{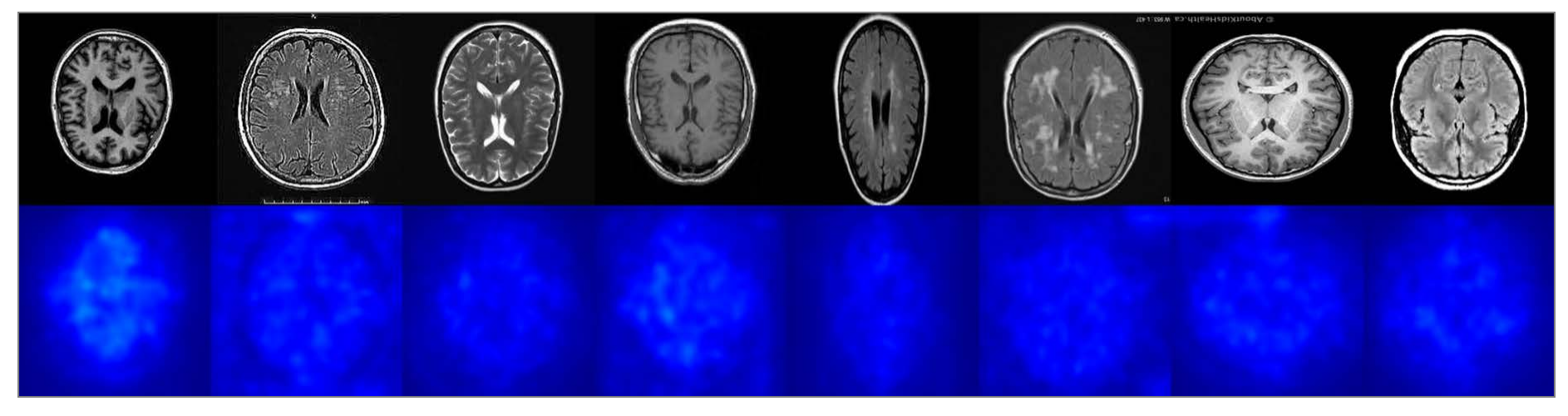}%
}
\subfigure[Abnormal Br35H]{%
    \includegraphics[width=0.48\linewidth]{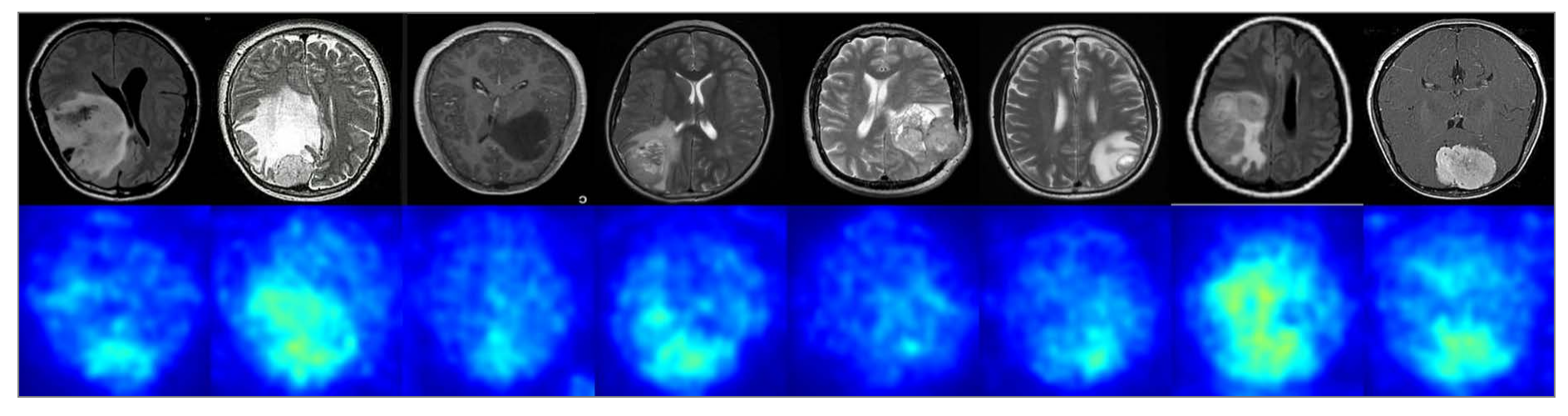}%
}
\subfigure[Normal OCT]{%
    \includegraphics[width=0.48\linewidth]{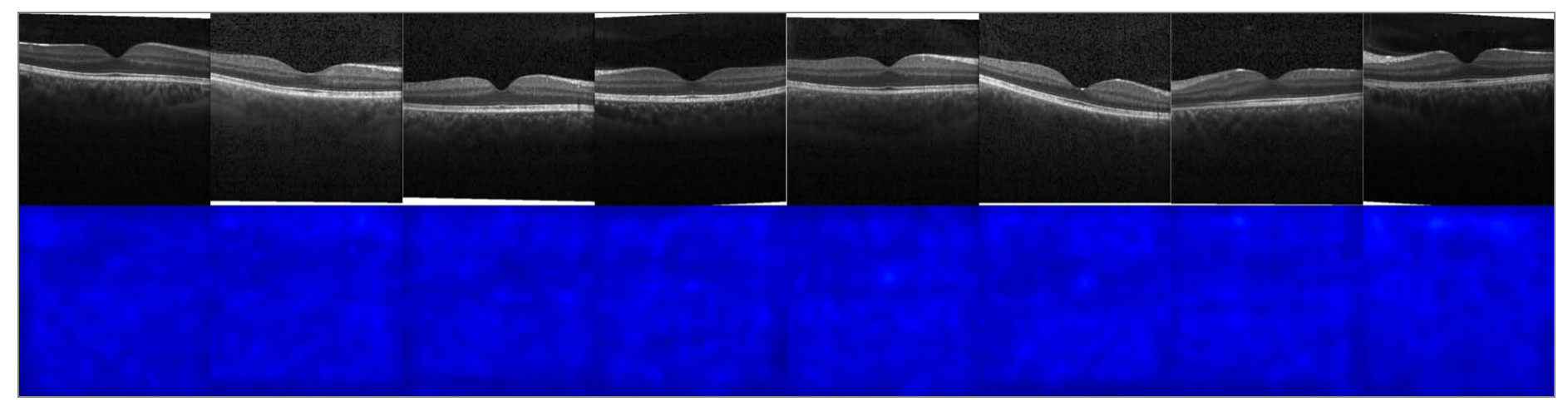}%
}
\subfigure[Abnormal OCT]{%
    \includegraphics[width=0.48\linewidth]{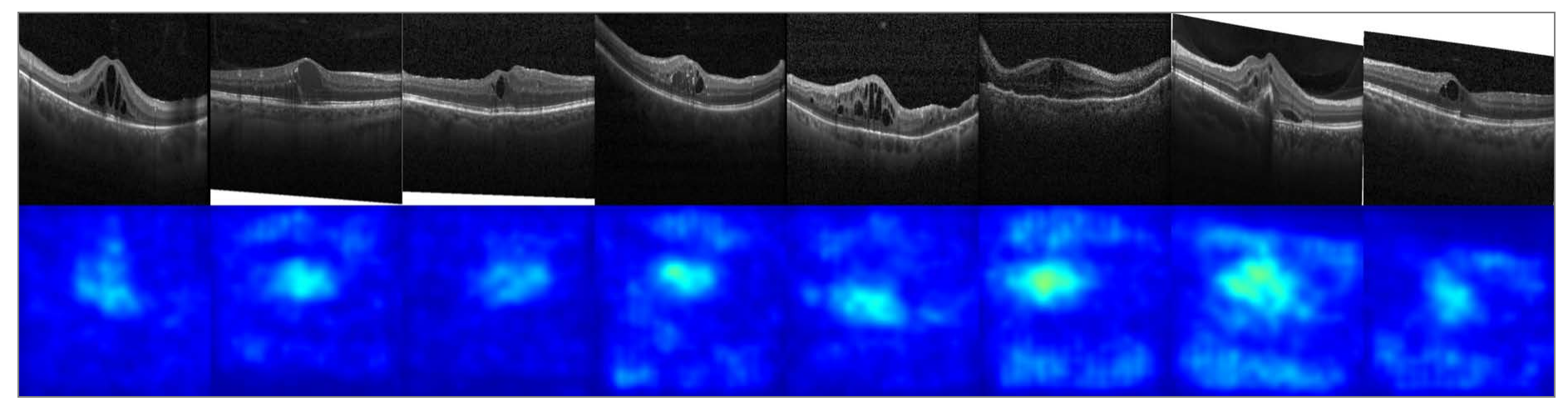}%
}
  
\subfigure[Normal ISIC]{%
    \includegraphics[width=0.48\linewidth]{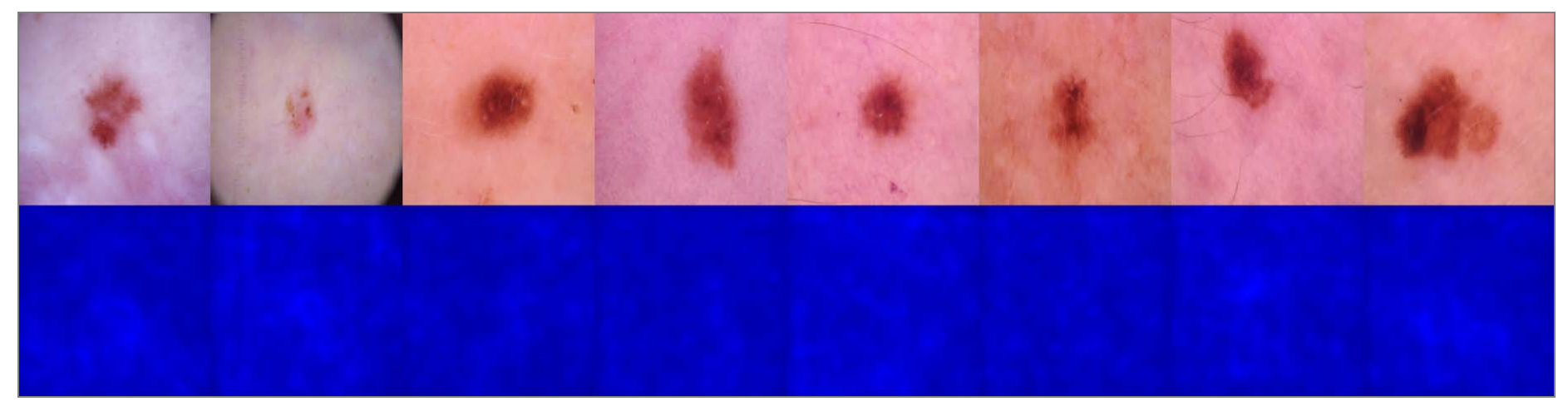}%
}
\subfigure[Abnormal ISIC]{%
    \includegraphics[width=0.48\linewidth]{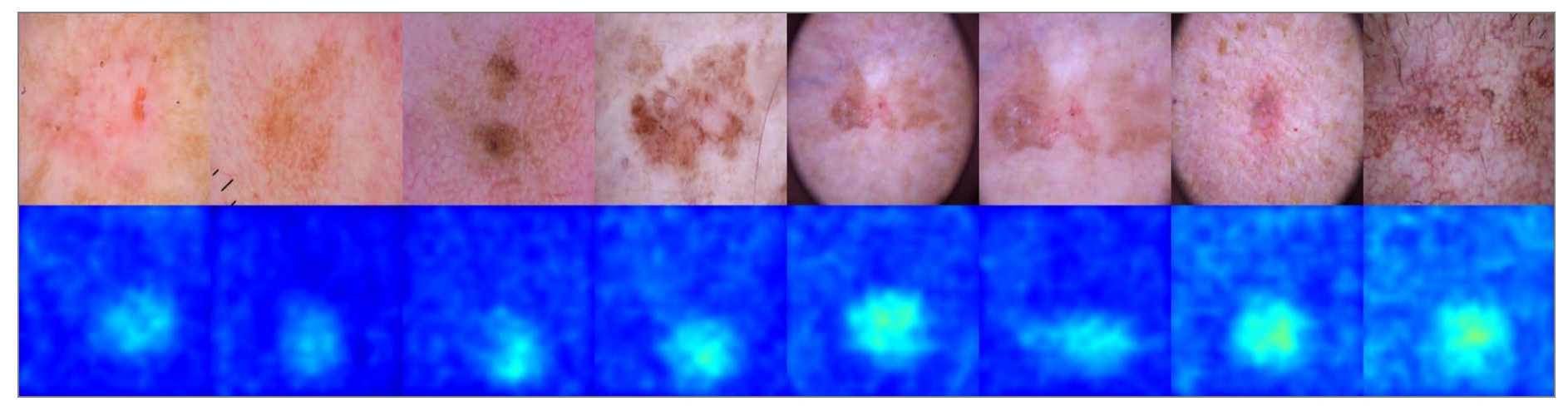}%
}
\caption{Detection visualization on medical datasets. In each subfigure, the two rows are origin images and anomaly score maps $S_{map}$ respectively.}
\label{fig:appendix_medical}
\end{figure}

\begin{figure}[htbp]
\centering 
\subfigure[mvtec: carpet]{%
    \includegraphics[width=0.45\linewidth]{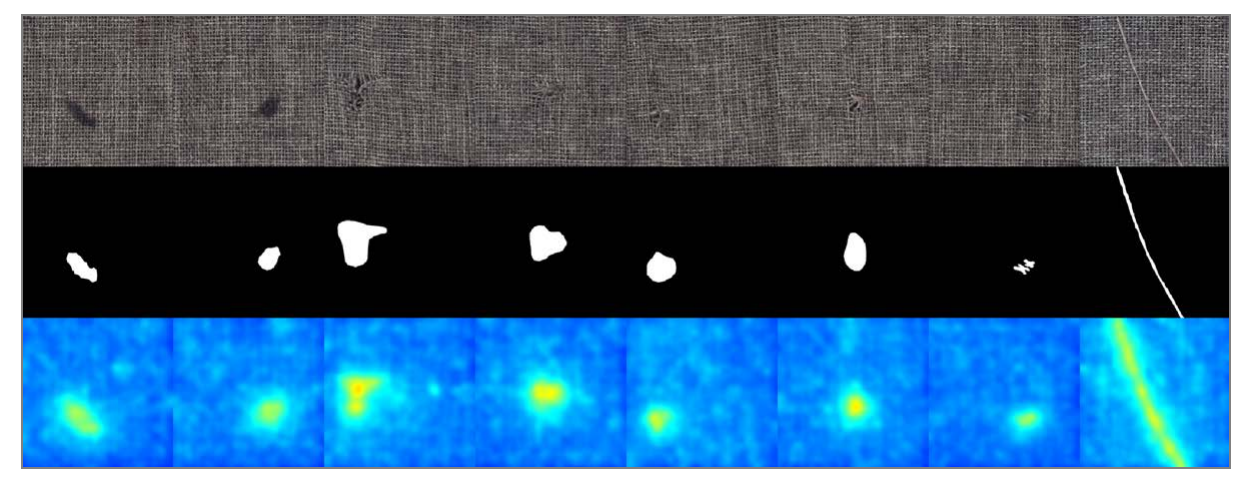}%
}
\subfigure[mvtec: grid]{%
    \includegraphics[width=0.45\linewidth]{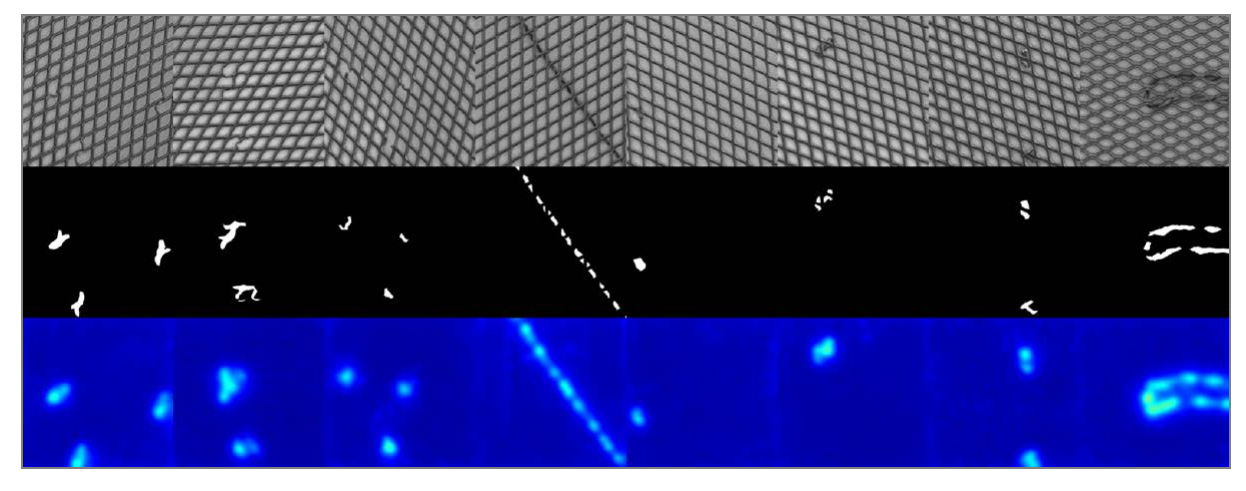}%
}
\subfigure[mvtec: tile]{%
    \includegraphics[width=0.45\linewidth]{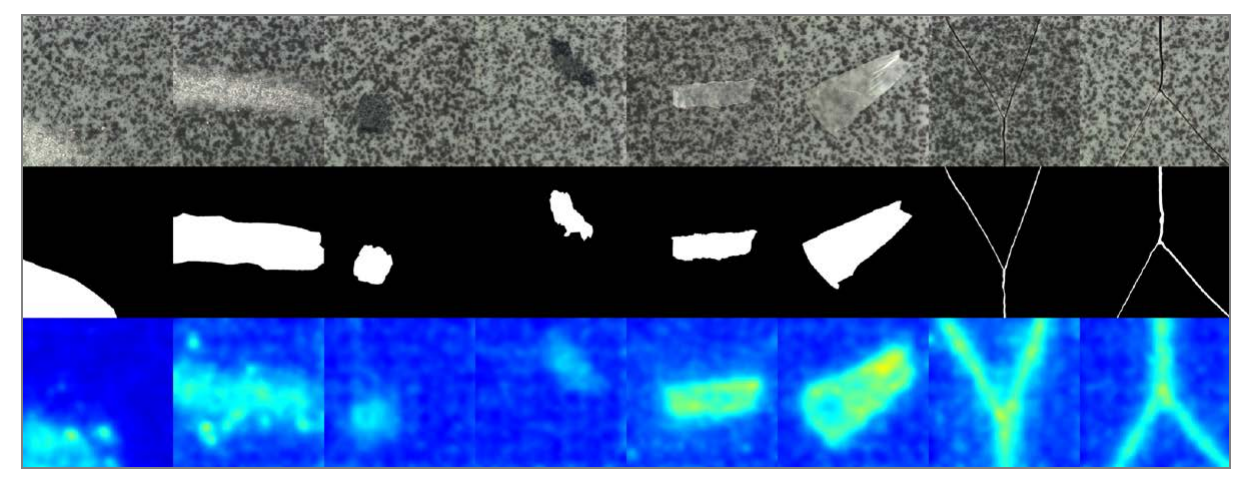}%
}
\subfigure[mvtec: wood]{%
    \includegraphics[width=0.45\linewidth]{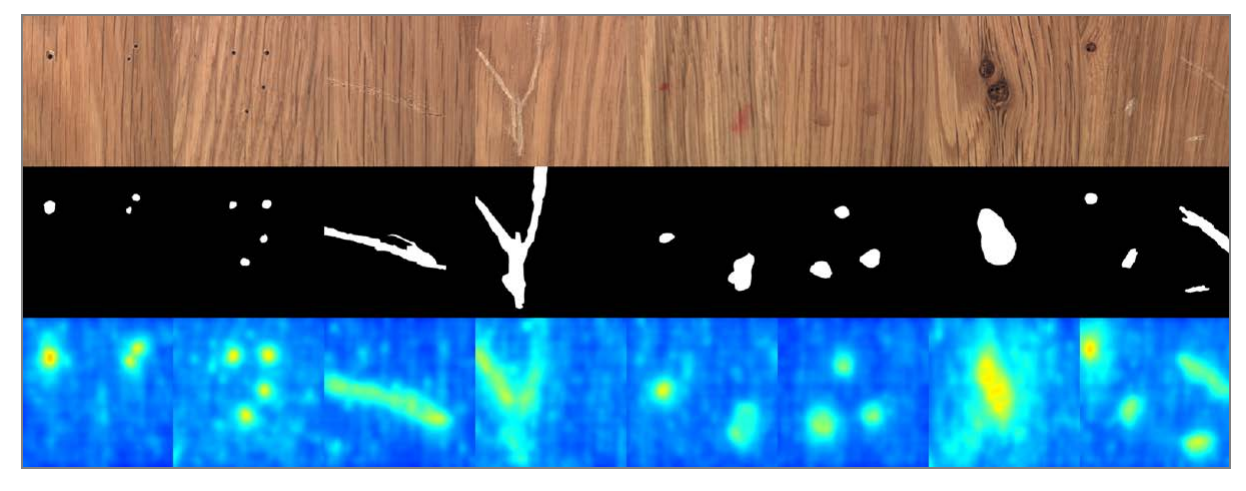}%
}
\subfigure[mvtec: bottle]{%
    \includegraphics[width=0.45\linewidth]{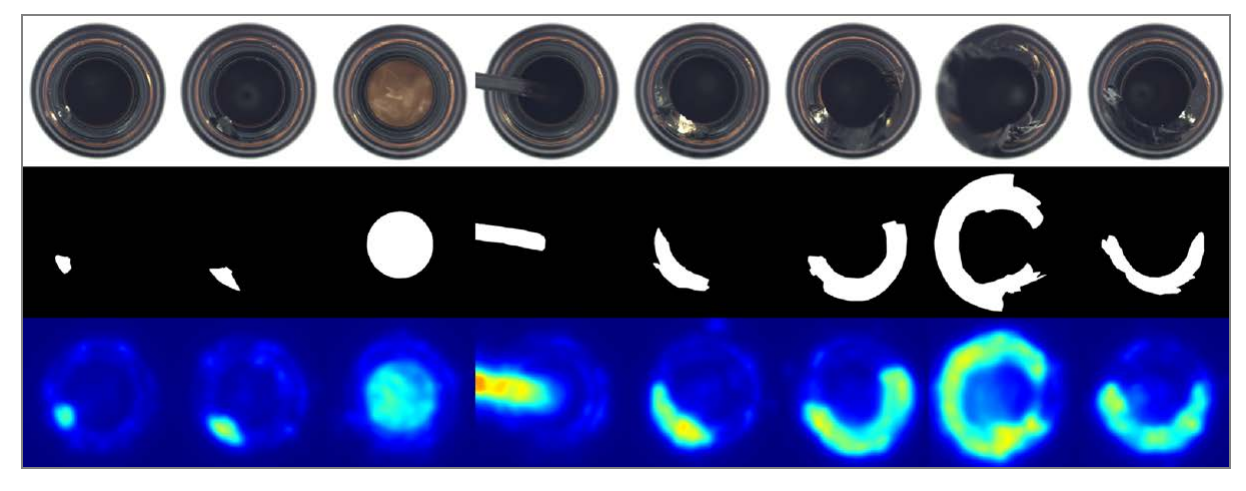}%
} 
\subfigure[mvtec: cable]{%
    \includegraphics[width=0.45\linewidth]{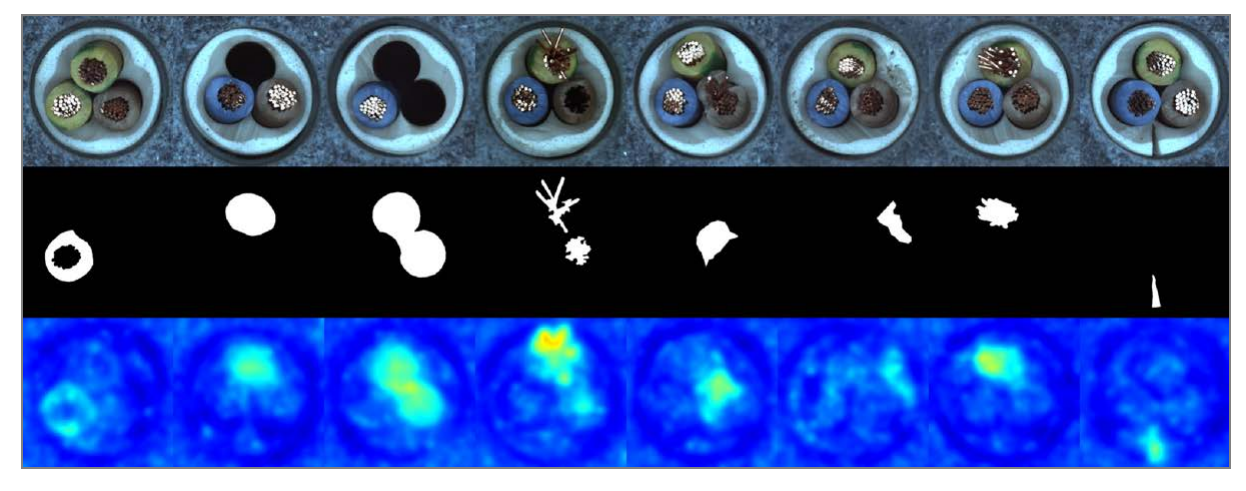}%
}
\subfigure[mvtec: capusle]{%
    \includegraphics[width=0.45\linewidth]{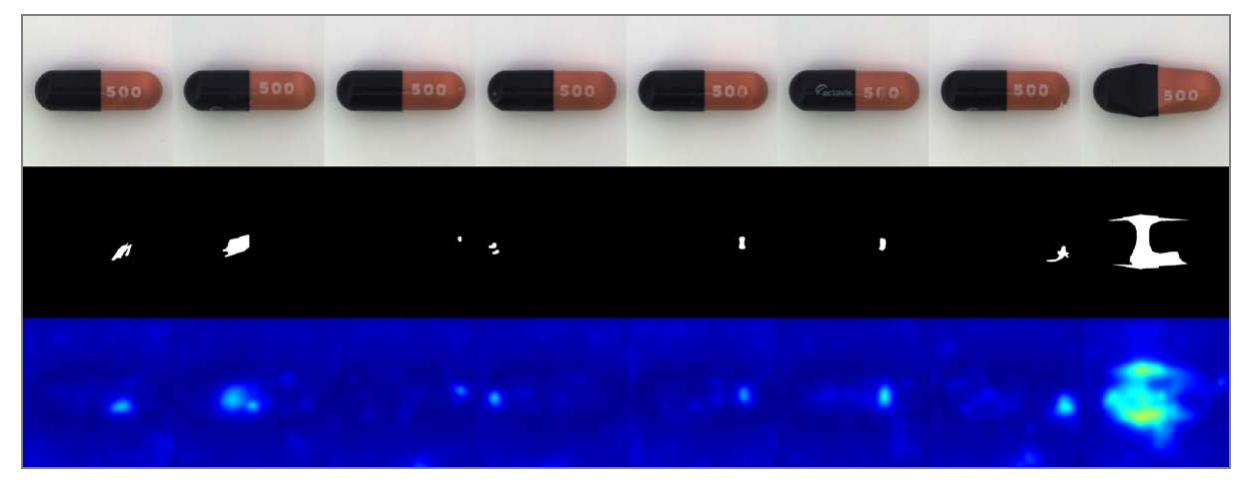}%
}
\subfigure[mvtec: hazelnut]{%
    \includegraphics[width=0.45\linewidth]{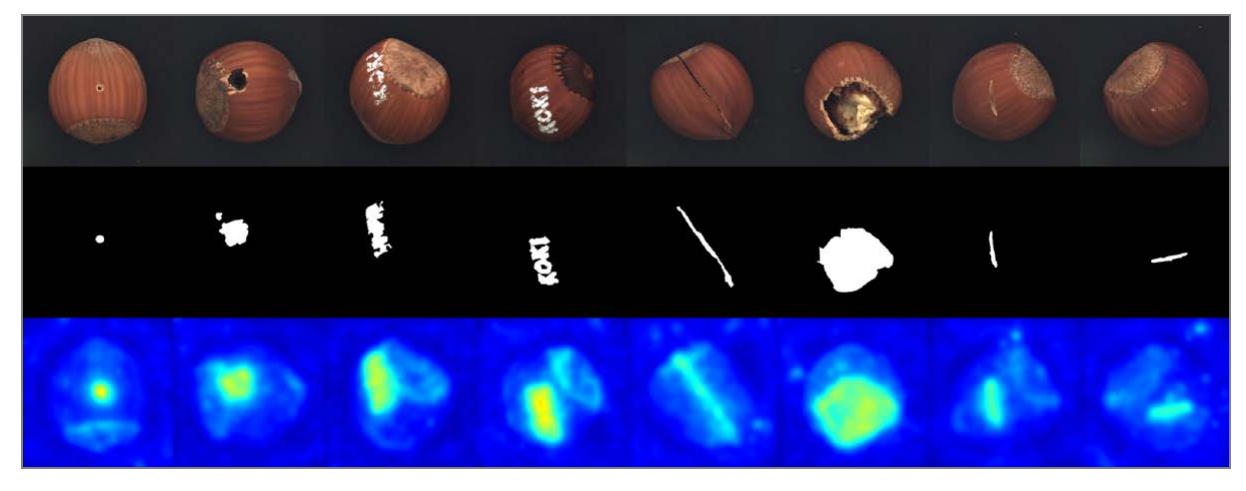}%
}
\subfigure[mvtec: metal\_nut]{%
    \includegraphics[width=0.45\linewidth]{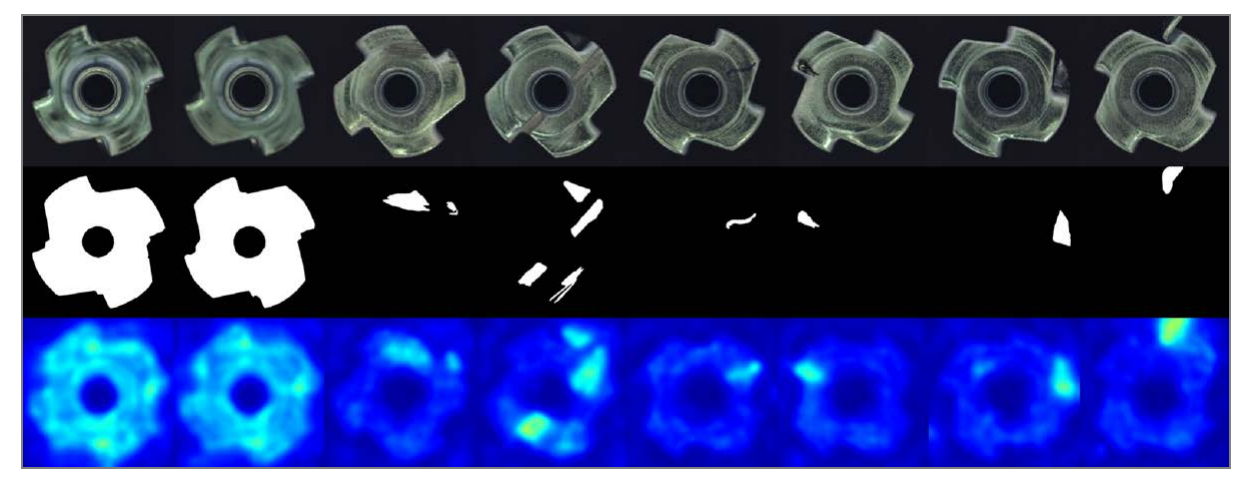}%
}
\subfigure[mvtec: screw]{%
    \includegraphics[width=0.45\linewidth]{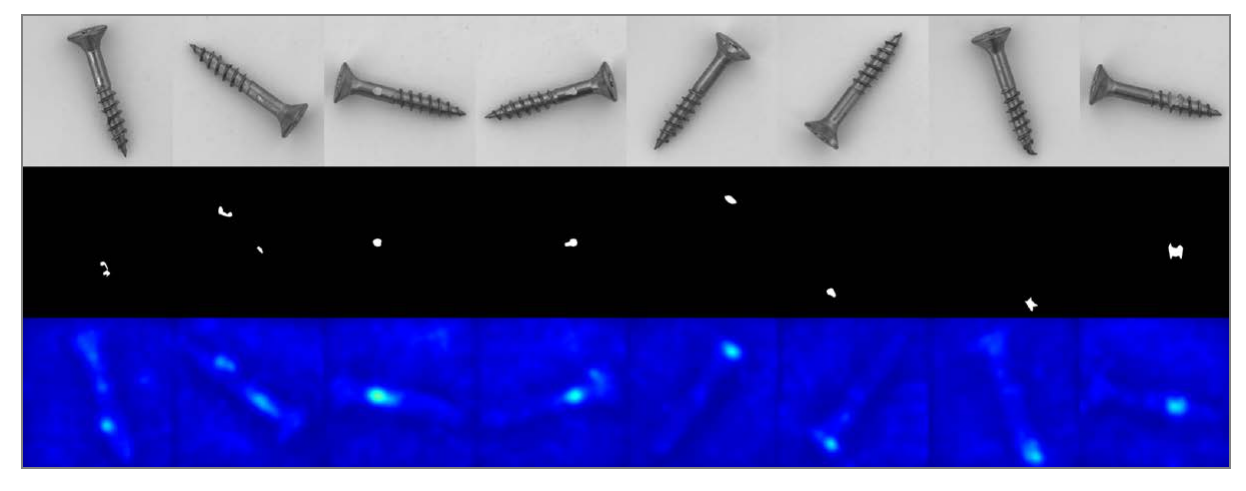}%
}
\subfigure[mvtec: transistor]{%
    \includegraphics[width=0.45\linewidth]{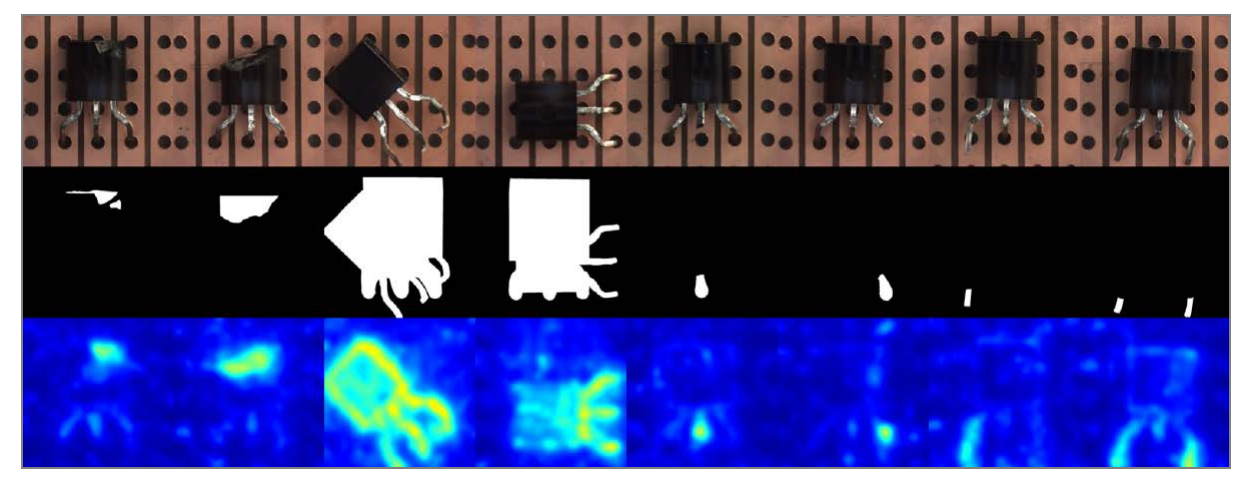}%
}
\subfigure[mvtec: toothbrush]{%
    \includegraphics[width=0.45\linewidth]{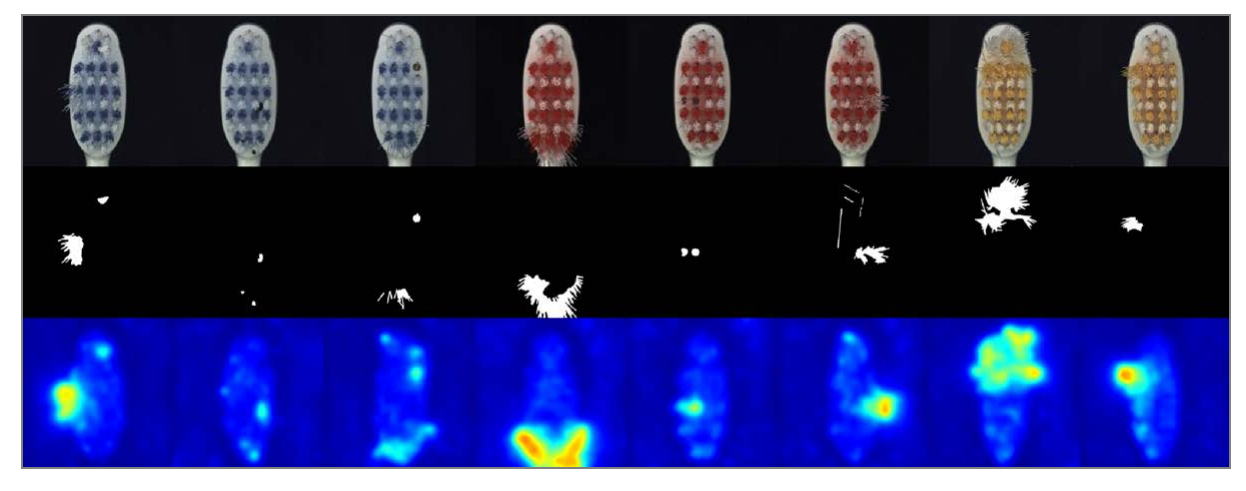}%
}
\subfigure[visa: candle]{%
    \includegraphics[width=0.45\linewidth]{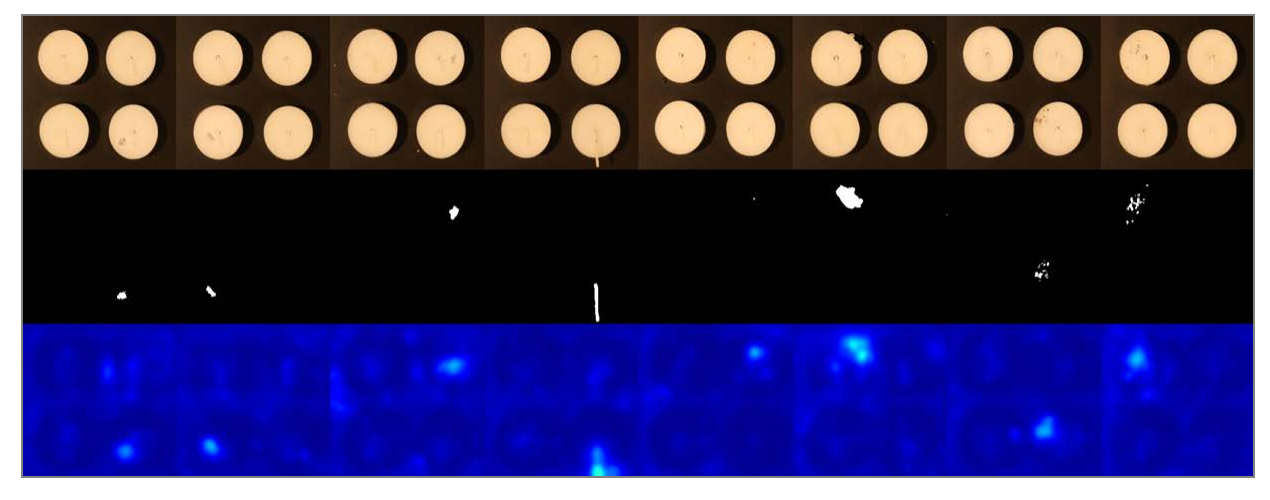}%
}
\subfigure[visa: capsules]{%
    \includegraphics[width=0.45\linewidth]{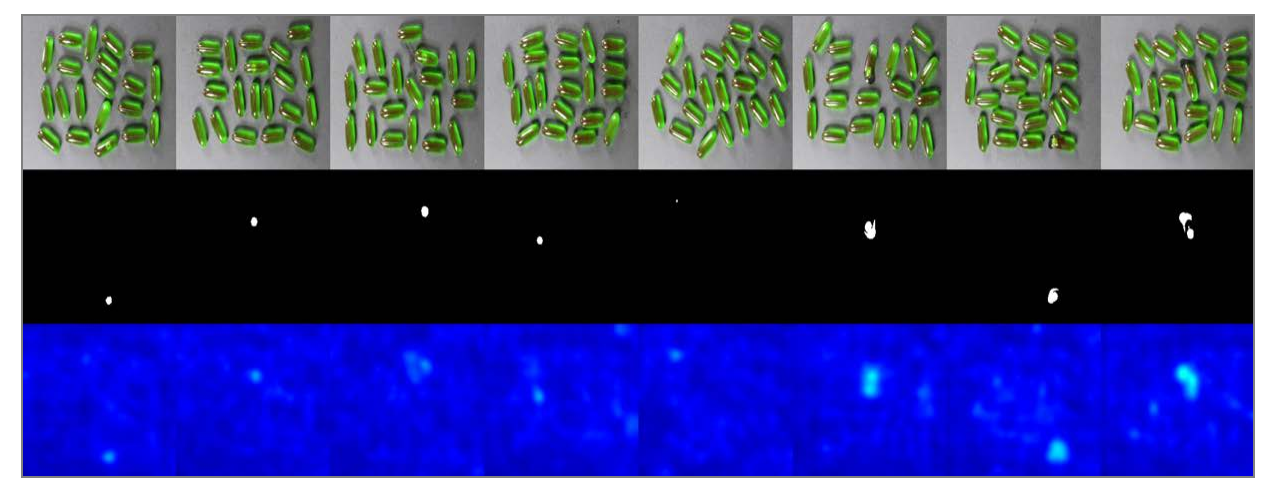}%
}
\subfigure[visa:cashew]{%
    \includegraphics[width=0.45\linewidth]{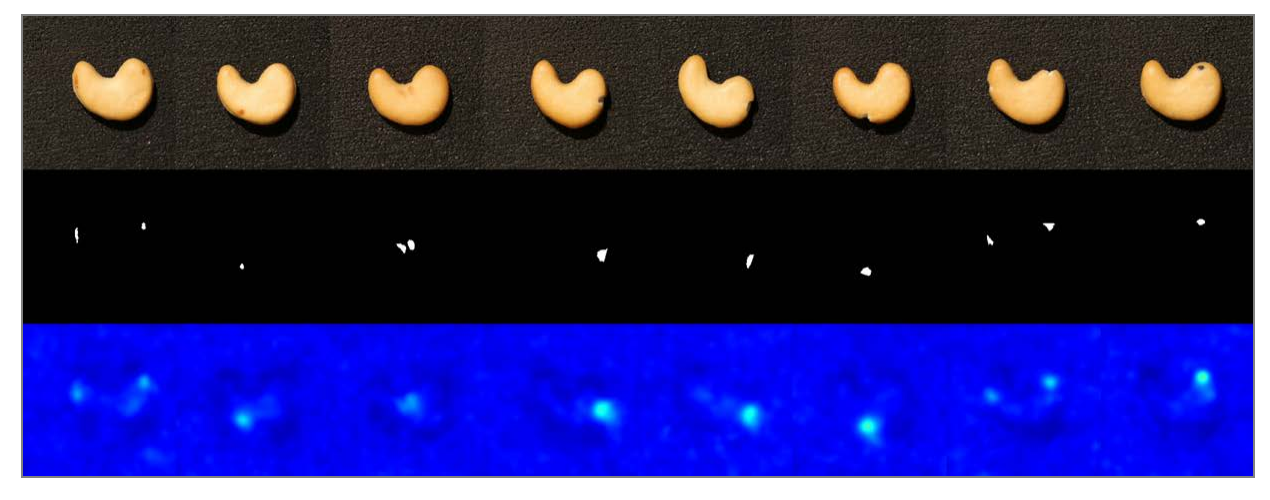}%
}
\subfigure[visa: chewinggum]{%
    \includegraphics[width=0.45\linewidth]{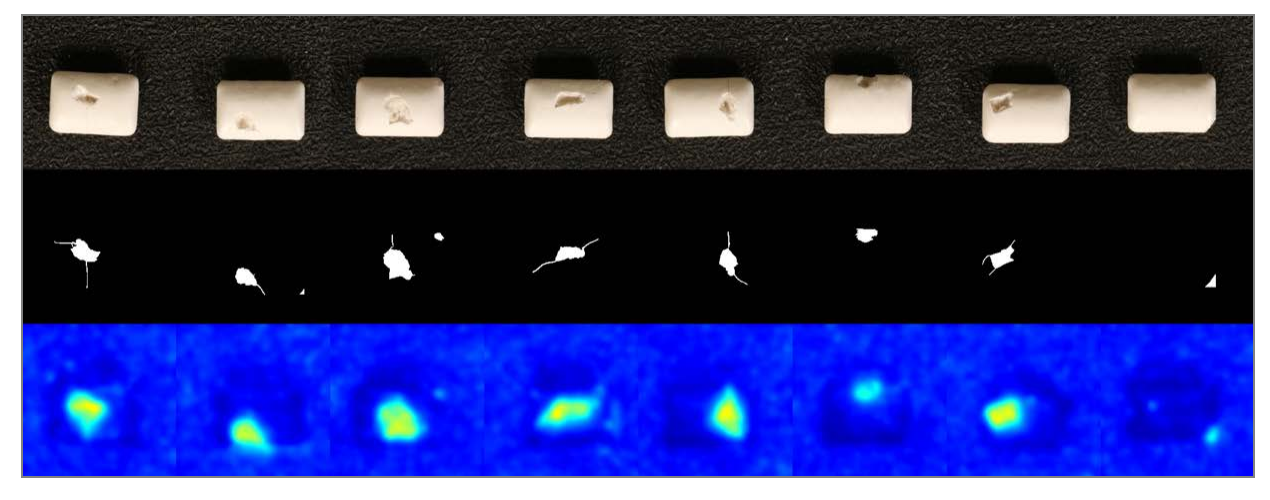}%
}
\subfigure[visa: fryum]{%
    \includegraphics[width=0.45\linewidth]{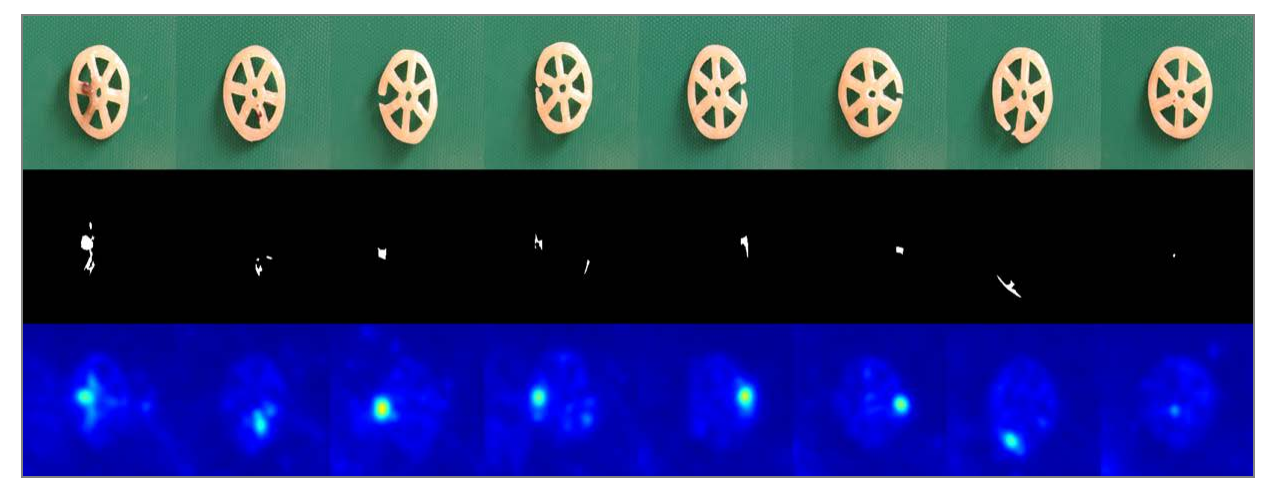}%
}
\subfigure[visa: macaroni1]{%
    \includegraphics[width=0.45\linewidth]{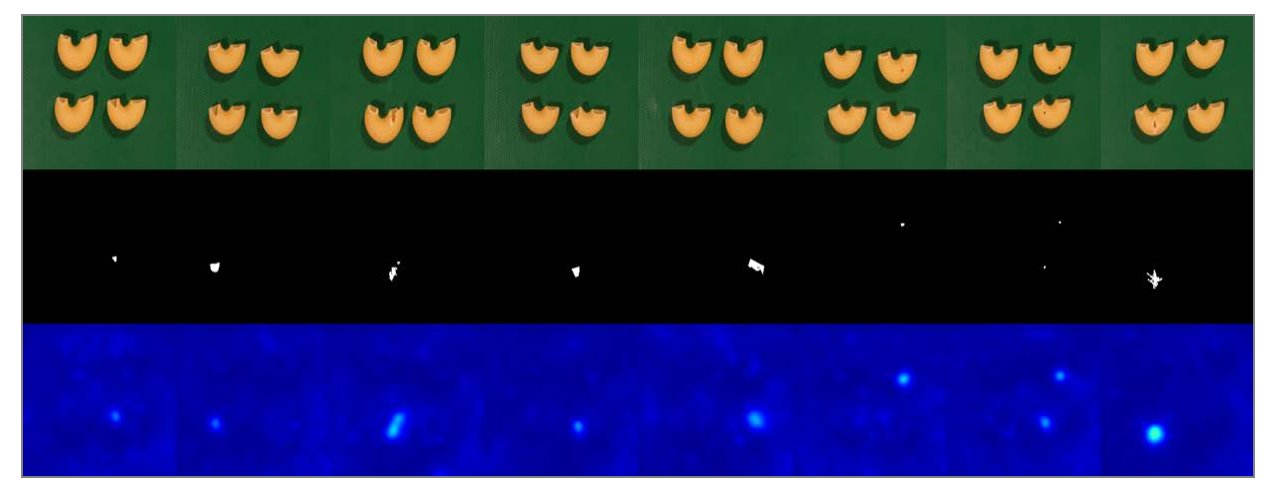}%
} 
\caption{Detection visualization on industrial datasets. In each subfigure, the three rows are origin images, ground truth anomaly masks and anomaly score maps $S_{map}$ respectively.}
\label{fig:appendix_mvtec}
\end{figure}

\end{document}